\newcommand{\altL}{L}
\newcommand{\altI}{I}
\newcommand{\bn}{\bm{\|}}
\newcommand{\bl}{\bm{\langle}}
\newcommand{\br}{\bm{\rangle}_{2}}
\newcommand{\mb}[1]{\mathbf{#1}}
\newcommand{\E}{{E}}
\newcommand{\cH}{{\mathcal{H}}}
\newcommand{\B}{{\mathcal{B}}}
\newcommand{\mS}{\bm{S}}
\newcommand{\mX}{\phi(X)}
\newcommand{\mY}{\bm{Y}}
\newcommand{\mZ}{\bm{Z}}
\newcommand{\cO}{\mathcal{O}}
\newcommand{\cF}{\mathcal{F}}
\newcommand{\cM}{\mathcal{M}}
\newtheorem{prop}{Proposition}
\newtheorem{lem}{Lemma}
\newtheorem{prob}{Problem}
\newtheorem{defn}{Definition}
\DeclareMathOperator*{\cch}{cch}
\DeclareMathOperator*{\argmin}{arg\,min}
\begin{document}

\title{Oblivious Data for Fairness with Kernels}

\author{\name Steffen Gr\"unew\"alder \email s.grunewalder@lancaster.ac.uk 
\AND
\name Azadeh Khaleghi \email a.khaleghi@lancaster.ac.uk 
\\
       \addr Department of Mathematics and Statistics \\
       Lancaster University\\
       Lancaster,  UK}

\editor{}

\maketitle

\begin{abstract}
We investigate the problem of algorithmic fairness in the case where sensitive and non-sensitive features are available and one aims to generate new, `oblivious', features that closely approximate the non-sensitive features, and are only minimally dependent on the sensitive ones. We study this question in the context of kernel methods. We analyze a relaxed version of the Maximum Mean Discrepancy criterion which does not guarantee full independence but makes the optimization problem tractable. We derive a closed-form solution for this relaxed optimization problem and complement the result with a study of the dependencies between the newly generated features and the sensitive ones. Our key ingredient for generating such oblivious features is a Hilbert-space-valued conditional expectation, which needs to be estimated from data. We propose a plug-in approach and demonstrate how the estimation errors can be controlled. While our techniques help reduce the bias, we would like to point out that  no post-processing of any dataset could possibly serve as an alternative to well-designed experiments. 
\end{abstract}

\begin{keywords}
Algorithmic Fairness, Kernel Methods
\end{keywords}

\section{Introduction}
Machine learning algorithms trained on historical data may inherit implicit biases which can in turn lead to potentially unfair outcomes for some individuals or minority groups. For instance, gender-bias may be present in a historical dataset on which a model is trained to automate the postgraduate admission process at a university. This may in turn render the algorithm biased, leading it to 
inadvertently generate unfair decisions. 
In recent years, a large body of work has been dedicated to systematically addressing this problem, whereby various notions of fairness have been considered, see, e.g. \citep{CAL09, ZEM13, LOU15, HAR16,JOS16, KIL17,KUS17,CAL17,ZAF17,KLE17,DON18, MAD18}, and references therein. 

Among the several {\em algorithmic fairness} criteria, one important objective is to ensure that a model's prediction is not influenced by the presence of sensitive information in the data.  
In this paper, we address this objective from the perspective of (fair) representation learning. Thus, a central question which forms the basis of our work is as follows.
\begin{center}
{\em Can the observed features be replaced by close approximations \\ that are independent of the sensitive ones?}
\end{center}
More formally, 
assume that we have a dataset such that each data-point is a realization of a random variable $(X,S)$ where $S$ and $X$ are in turn vector-valued random variables corresponding to the sensitive and non-sensitive features respectively. We further allow $X$ and $S$ to be arbitrarily dependent, and ask whether it is possible to generate a new random variable $Z$ which is ideally independent of $S$ and close to $X$ in some {\em meaningful probabilistic sense}. 
This objective is 
As an initial step, we may assume that $X$ is zero-mean, and aim for decorrelation between $Z$ and $X$.  
This can be achieved by letting 
$Z = X - E^S X$ 
where $E^S X$ is the conditional expectation of $X$ given $S$.
The random variable $Z$ so-defined is not correlated with $S$ 
and is close to $X$. In particular, it recovers $X$ if $X$ and $S$ are independent. 
In fact, under mild assumptions, $Z$ gives the best approximation (in the mean-squared sense) of $X$, while being uncorrelated with $S$. 
Observe that while the distribution of $Z$ differs from that of $X$, this new random variable seems to serve the purpose well. For instance, if $S$ corresponds to a subject's {\em gender} and $X$ to a subject's {\em height}, then $Z$ corresponds to height of the subject centered around the average height of the class corresponding to the subject's gender. The key contributions of this work, briefly summarized below, are theoretical; we also provide an evaluation of the proposed approach through experiments in the context of classification and regression\footnote{Our implementations are available at \url{https://github.com/azalk/Oblivious.git}.}. Before giving an overview of our results, we would also like to point out that while our techniques help reduce the bias, it is important to note that no post-processing of any dataset could possibly serve as an alternative to well-designed experiments. 

\paragraph{Contributions.} 
Building upon this intuition, and using results inspired by testing for independence using the Maximum Mean Discrepancy (MMD) criterion (see e.g.~\citet{GRET08}), 
we obtain a related optimization problem in which $X$ and $E^S X$ are replaced with Hilbert-space-valued random variables and Hilbert-space-valued conditional expectations. 
While the move to Hilbert spaces does not enforce complete independence between the new features and the sensitive features, it helps to  significantly reduce the dependencies between the features. The new features $\mZ$ have various useful properties which we explore in this paper. They are also easy to generate from samples $(X_1,S_1),\ldots, (X_n,S_n)$. The main challenge in generating the oblivious features $\mZ_1,\ldots,\mZ_n$ is that we do not have access to the  Hilbert-space-valued conditional expectation and need to estimate it from data. Since we are concerned with Reproducing Kernel Hilbert Spaces (RKHSs) here, we use the reproducing property to extend the plugin approach of \citet{SG} to the RKHS setting and tackle the estimation problem. We further show how estimation errors can be controlled. Having obtained the empirical estimates of the conditional expectations, we generate oblivious features and an oblivious kernel matrix to be used as input to any kernel method. 
This guarantees a significant reduction in the dependence between the predictions and the sensitive features.
We cast the objective of finding oblivious features $\mZ$ which approximate the original features $X$ well while maintaining minimal  dependence on the sensitive features $S$, as a constrained optimization problem. 
Making use of Hilbert-space-valued conditional expectations, we provide a closed form solution to the optimization problem proposed. Specifically, we first prove in that our solution satisfies the constraint of the optimization problem at hand, and show via Proposition \ref{lem:best_approx} that it is indeed optimal.
Through Proposition \ref{prop:alpha_dep} we relate the strength of the dependencies between $\mZ$ and $S$ to how close $\mZ$ lies to the low-dimensional manifold corresponding to the image under the feature map $\phi$.
This result is key in providing some insight into the interplay between probabilistic independence and approximations in the Hilbert space.
We extend known estimators for real-valued conditional expectations to estimate those taking values in a Hilbert space, and show via Proposition \ref{Prop:Est_error_cond} how to control their estimation errors. This result in itself may be of independent interest in future research concerning  Hilbert-space-valued conditional expectations. 
We provide a method to generate  oblivious features and the oblivious kernel matrix which can be used instead of the kernel matrix to reduce the dependence of the prediction on the sensitive features; the computational complexity of the approach is $O(n^2)$.

\paragraph{Related Work.} 
Among the vast literature on algorithmic fairness, \citet{DON18, MAD18},  which fit into the larger body of work on fair representation learning, are closest to our approach. 
\citet{MAD18} describe a general framework for fair representation learning. The approach taken is inspired by generative adversarial networks and is based on a game played between generative models and adversarial evaluations. Depending on which function classes one considers for the generative models and for the adversarial evaluations one can describe a vast array of approaches. Interestingly, it is possible to interpret our approach in this general context: the encoder $f$ corresponds to a map from $\mathbb{X}$ and $\mathbb{S}$ to $\cH$, where our new features $\mZ$ live. We do not have a decoder but compare features directly (one could also take our decoder to be the identity map). Our adversary is different from that used by \citet{MAD18}. In their approach a regressor is inferred which maps the features to the sensitive features, while we compare sensitive features and new features by applying test functions to them. The regression approach performs well in their context because they only consider finitely many sensitive features. In the more general framework considered in the present paper where the sensitive features are allowed to take on continuous values, this approach would be sub-optimal since it cannot capture all dependencies.
Finally, we ignore labels when inferring new features. 
It is also worth pointing out that our approach is not based on a game played between generative models and an adversary but we provide closed form solutions.
On other hand, while the focus of \citet{DON18} is mostly on empirical risk minimization under fairness constraints, the authors briefly discuss representation learning for fairness as well. In particular, Equation (13) in the reference paper effectively describes  a conditional expectation in Hilbert space, though it is not denoted or motivated as such. The conditional expectation is based on the binary features $S$ only and the construction is applied in the linear kernel context to derive new features. The authors do not go beyond the linear case for representation learning but there is a clear link to the more general notions of conditional expectation on which we base our work. We discuss the relation to \citet{DON18} in detail in Section \ref{sec:disc_don} and we show how their approach can be extended beyond binary sensitive features by making use of our conditional expectation estimates. 

\paragraph{Organization.}
The rest of the paper is organized as follows. In Section~\ref{sec:pre} we introduce our notation 
and provide preliminary definitions used in the paper. 
Our problem formulation and optimization objective are stated in Section \ref{sec:prob}. 
As part of the formulation we also define the notion of $\cH$-independence between Hilbert-space-valued features  and the sensitive features. In Section \ref{sec:bounds_on_dep} we study the relation between $\cH$-independence and bounds on the dependencies between oblivious and sensitive features. In Section \ref{sec:best_ind_feat} we provide a solution to the optimization objective. In Section \ref{sec:est_cond_exp} we derive an estimator for the conditional expectation and use it to generate oblivious features and the oblivious kernel matrix. We provide some empirical evaluations in Section~\ref{sec:experiments}. 
\section{Preliminaries}\label{sec:pre}
In this section we introduce some notation and basic definitions. 
Consider a probability space $(\Omega,\mathcal{A},P)$. For any $A \in \mathcal A$ we let $\chi {A}:\Omega \rightarrow \{0,1\}$ be the indicator function such that $\chi A(\omega) = 1$ if, and only if, $\omega \in A$. 
Let $\mathbb X$ be a measurable space in which a random variable $X: \Omega \rightarrow \mathbb X$ takes values. 
We denote by $\sigma(X)$ the $\sigma$-algebra generated by $X$. 
Let $\cH$ be an RKHS composed of functions $h:\mathbb{X} \rightarrow \mathbb{R}$ and denote its feature map by $\phi(x): \mathbb X \rightarrow \cH$ where, $\phi(x)=k(x,\cdot)$ for some positive definite kernel $k: \mathbb X \times \mathbb X \rightarrow \mathbb R$. As follows from the reproducing kernel property of $\cH$ we have $\langle \phi(x), h \rangle = h(x)$  for all $h \in \cH$.  
Moreover, observe that $\phi(X)$ is in turn a random variable attaining values in $\cH$. In Appendix \ref{app:prob_in_H} we provide some technical details concerning Hilbert-space-valued random variables such as $\phi(X)$.
\paragraph{Conditional Expectation.} 
Let $S: \Omega \rightarrow \mathbb S$ be a random variable taking values in a measurable space $\mathbb S$. 
For the random variable $X$ defined above, we denote by $E^S X$ the random variable corresponding to Kolmogorov's conditional expectation of $X$ given $S$, i.e. $E^S X = E (X|\sigma(S))$, see, e.g. \cite{SHI89}. Recall that in a special case where $\mathbb S = \{0,1\}$ we simply have
\[
E(X |S=0) \chi \{S=0\} + E(X |S=1) \chi \{S=1\}
\]
where, $E(X|S=i)$ is the familiar conditional expectation of  $X$ given the event $\{S=i\}$ for $i=0,1$. Thus, in this case, the random variable $E^S X$ is equal to $E(X|S=0)$ if $S$ attains value $0$ and is equal to $E(X|S=1)$ otherwise. Note that the above example is for illustration only, and that $X$ and $S$ may be arbitrary random variables: they are not required to be binary or discrete-valued. Unless otherwise stated, in this paper we use Kolmogorov's notion of conditional expectation.
We will also be concerned with conditional expectations that attain values in a Hilbert space $\cH$, which mostly behave like real-valued conditional expectations (see \citet{PIS16} and Appendix~\ref{sec:HS_space_valued} for details).
Next, we introduce Hilbert-space-valued $\mathcal{\altL}^2$-spaces which play a prominent role in our results.
\paragraph{Hilbert-space-valued $\mathcal{\altL}^2$-spaces.} 
For a Hilbert space $\cH$, we denote by $\mathcal{\altL}^2(\cH)= \mathcal{\altL}^2(\Omega,\mathcal{A},P;\cH)$ the $\cH$-valued $\mathcal{\altL}^2$ space. If $\cH$ is an RKHS with a bounded and measurable kernel function then $\phi(X)$ is an element of $\mathcal{\altL}^2(\Omega,\mathcal{A},P;\cH)$. 
The space $\mathcal{\altL}^2(\Omega,\mathcal{A},P;\cH)$ consists of all (Bochner)-measurable functions $\mX$ from $\Omega$ to $\cH$ such that $E(\|\mX\|^2)<\infty$  (see Appendix~\ref{app:prob_in_H} for more details). We call these functions  random variables or Hilbert-space-valued random variables and denote them with bold capital letters. 
As in the scalar case we have a corresponding space of equivalence classes which we denote by $L^2(\Omega,\mathcal{A},P;\cH)$. For $\mX,\mY \in
\mathcal{\altL}^2(\Omega,\mathcal{A},P;\cH)$ we use $\mX^\bullet,\mY^\bullet$ for the corresponding equivalence classes in $L^2(\Omega,\mathcal{A},P;\cH)$. The space $L^2(\Omega,\mathcal{A},P;\cH)$  is itself a Hilbert space with norm and inner product given by
$\bn \mX^\bullet \bn_{2}^2 = E(\|\mX \|^2)$ and  $\bl \mX^\bullet, \mY^\bullet \br = E(\langle \mX,  \mY \rangle)$, 
where we use a subscript to distinguish this  norm and inner product from the ones from $\cH$. The norm and inner product have a corresponding pseudo-norm and bilinear form acting on $\mathcal{\altL}^2(\cH)$ and we also denote these by $\bn \cdot \bn_{2}$ and     $\bl \cdot, \cdot \br$.

\section{Problem Formulation}\label{sec:prob}
We formulate the problem as follows. 
Given two random variables $X: \Omega \rightarrow \mathbb X$ and $S: \Omega \rightarrow \mathbb S$ corresponding to non-sensitive and sensitive features in a dataset, we wish to devise a random variable $Z: \Omega \rightarrow \mathbb X$ which is independent of $S$ and closely approximates $X$ in the sense that for all $Z': \Omega \rightarrow \mathbb X$ we have,
\begin{equation}\label{eq:obj1}
    \| Z-X \|_2 \leq \| Z'-X \|_2. 
\end{equation}
Dependencies between random variables can be very subtle and difficult to detect. Similarly, completely removing the dependence of $X$ on $S$ without changing $X$ drastically is an intricate task that is rife with difficulties. Thus, we aim for a more tractable objective, described below, which still gives us control over the dependencies. 

\begin{figure}[t]
\begin{flushleft}    
\begin{tikzpicture}
\node at (0.3,1.8) {(a)};
    \begin{tikzcd}
                             & \mathbb{X} \arrow[dr,"\phi"] &  \\
     \Omega \arrow[ur,"X"] \arrow[rr,"\mZ"] \arrow[dr,"S"] &  & \cH \\
     &  \mathbb{S} &   
    \end{tikzcd}
      \begin{tikzcd}[cramped,sep=small]
    \quad & \quad
    \end{tikzcd}
    \node at (0.5,1.8) {(b)};
    \node[anchor=south west,inner sep=0] at (2,1) {\includegraphics[trim={5cm, 18cm, 3cm, 5cm}, width=0.6 \textwidth]{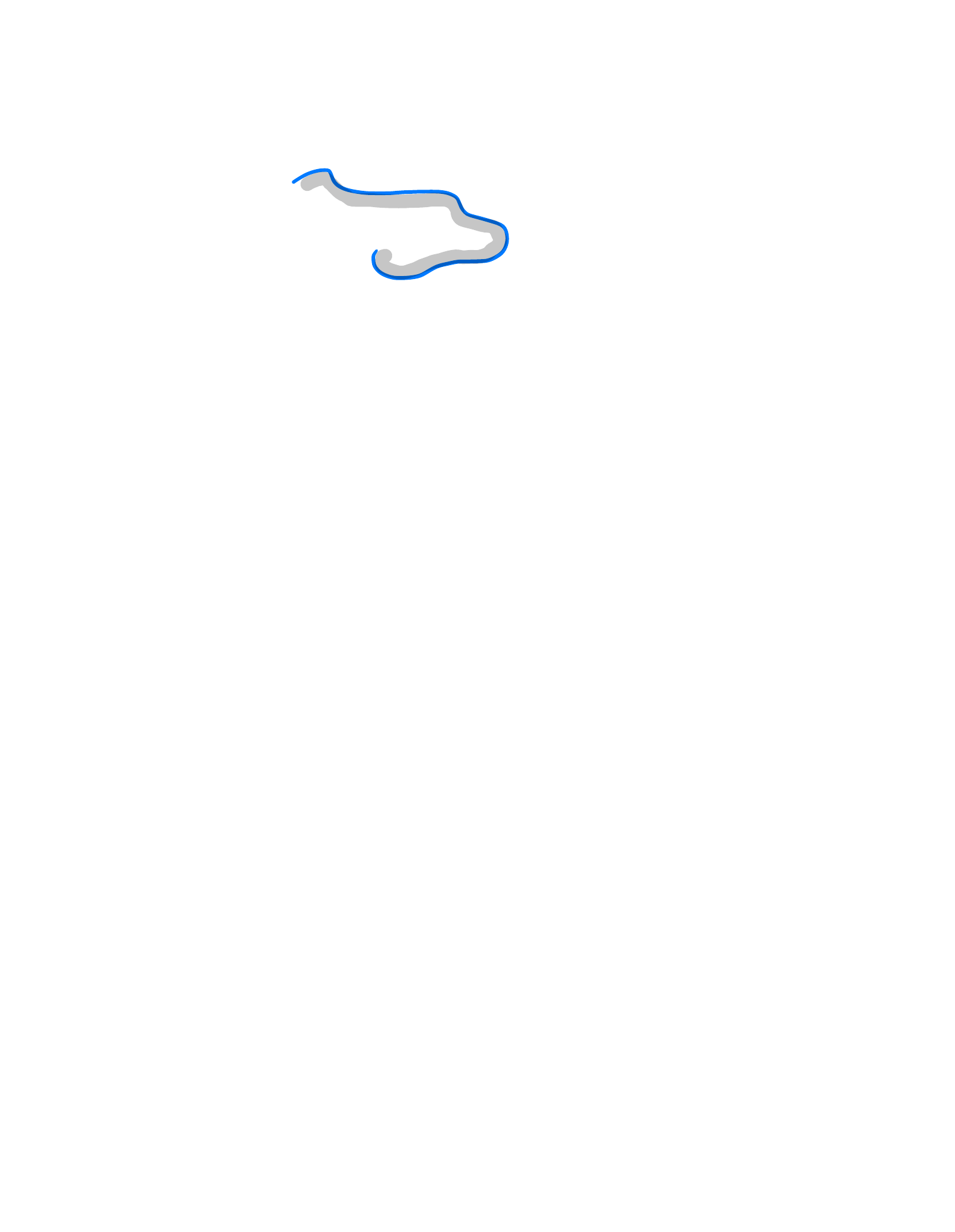}};
    \draw[->,thick] (1,-1.5) -- (1,1.5);
\draw (0.7,0) node {$e_2$};
\draw (2.5,-1.8) node {$e_1$};
\draw[->,thick] (1,-1.5) -- (4,-1.5);
\draw (5.3,1.5) node  {$\cH$}; 
\draw [thick] plot [smooth, tension=1] coordinates { (4.5,0.3) (5.3,0.7) (6, 0.8)};
\draw (7.5,0.8) node[text width=3cm] {\textit{Low dependence region}};
\draw (3.5,0.5) node[above] {$\phi[\mathbb{X}]$};
\draw[fill] (3.7,-0.2) circle (0.2ex); 
\draw (4,-0.2) node {$h^*$}; 
\end{tikzpicture}
\end{flushleft}
\caption{\textbf{(a)} The three main random variables in Problem~\ref{prob:main} are shown. The non-sensitive features $X$ attains values in $\mathbb{X}$ and is mapped onto the RKHS $\cH$ through the feature map $\phi$; the sensitive features $S$ attains values in $\mathbb{S}$, and $\mZ$ attains values in $\cH$. All three random variables are defined on the same probability space $(\Omega,\mathcal{A},P)$.
\textbf{(b)} The image of $\mathbb{X}$ under $\phi$  is sketched (blue curve). This is a subset of $\cH$ whose projection onto the subspace spanned by two orthonormal basis elements  $e_1$ and $e_2$ is shown here.  The set $\phi[\mathbb{X}]$  is a low-dimensional manifold if $\phi$ is continuous. The element $h^* = E(\phi(X))$ lies in the convex hull of $\phi[\mathbb{X}]$. Intuitively, if $\mZ$ attains values mainly in the gray shaded area then $\mZ$ is only weakly dependent on $S$.
} \label{fig:dependence_bound}
\end{figure}

We start by a  \textit{strategic shift} from probabilistic concepts to interactions between functions and random variables. Consider the RKHS $\cH$ of functions $h:\mathbb{X} \rightarrow \mathbb{R}$ with feature map $\phi$ as introduced in Section~\ref{sec:pre}, and assume that $\cH$ is large enough to allow for the approximation of arbitrary indicator functions $\chi \{Z \in A'\}$ in the $\mathcal{\altL}^2$-pseudo-norm for any $\mathbb{X}$-valued random variable $Z$.
Observe that if
\begin{equation} \label{eq:independence}
E(h(Z) \times g(S)) = E(h(Z))  \cdot E(g(S))
\end{equation}
for all $h\in\cH, g\in \mathcal{\altL}^2$ then $Z$ and $S$ are, indeed, independent. 
This is because $h$ and $g$ can be used to approximate arbitrary indicator functions, which together with \eqref{eq:independence} gives,
\begin{align*}
P( \{Z\in A'\} \cap \{S\in B'\}) \approx E(h(Z) \times g(S)) 
= E(h(Z))  \cdot E(g(S)) \approx P(Z\in A') \cdot P(S\in B').
\end{align*}
This means that the independence constraint of the optimization problem of \eqref{eq:obj1} translates to \eqref{eq:independence}. Note that using RKHS elements as test functions is a common approach for detecting dependencies and is used in the MMD-criterion  (e.g.~\citet{GRET08}).

On the other hand, due to the reproducing property of the kernel of $\cH$, we can also rewrite the constraint
\eqref{eq:independence} as 
\begin{equation}\label{eq:const}
E(\langle h, \phi(Z) \rangle \times g(S)) = E \langle h,\phi(Z) \rangle \cdot E(g(S)).
\end{equation}
Observe that $\phi(Z)$ is a random variable that attains values in a low-dimensional manifold; if the kernel function is continuous and $\mathbb{X} = \mathbb{R}^d$ then the image $\phi[\mathbb{X}]$  of $\mathbb{X}$ under $\phi$ is a $d$-dimensional manifold which we denote in the following by $\mathcal{M}$. In Figure \ref{fig:dependence_bound} this manifold is visualized as the blue curve. 
Therefore, while Equation \eqref{eq:const} is linear in $\phi(Z)$, depending on the shape of the manifold, it can lead to an arbitrarily complex optimization problem. 

We propose to relax \eqref{eq:const} by moving away from the manifold, replacing $\phi(Z)$ with a random variable $\mZ: \Omega \rightarrow \cH$ 
which potentially has all of $\cH$ as its range. This simplifies the original optimization problem to one over a vector space under a linear constraint. To formalize the problem, we rely on a notion of {\em $\cH$-independence} introduced below. 
\begin{defn}[$\cH$-Independence]\label{def:ind_inner}
We say that $\mZ \in \mathcal{\altL}^2(\Omega,\mathcal{A},P;\cH)$ and $S: \Omega \rightarrow \mathbb S$ are $\cH$-independent if and only if for all $h \in \cH$ and all bounded measurable $g:\mathbb{S} \rightarrow \mathbb{R}$ it holds that,
$E(\langle h,\mZ \rangle \times g(S)) = \E \langle h,\mZ\rangle  \times \E(g(S)).$
\end{defn}
Thus, instead of solving for $Z: \Omega \rightarrow \mathbb X$ in \eqref{eq:obj1}, we seek a solution to the following optimization problem. 
\begin{prob}\label{prob:main}
Find $\mZ \in \mathcal{\altL}^2(\Omega,\mathcal{A},P;\cH)$ that is $\cH$-independent from $S$ (in the sense of Definition~\ref{def:ind_inner}) and is close to $X$ in the sense that
\[
\bn \mZ - \phi(X) \bn_2 \leq \bn \mZ' -\phi(X) \bn_2
\]
for all $\mZ'$ which are also $\cH$-independent of $S$.
\end{prob}
Observe that the $\cH$-independence constraint imposed by Problem~\ref{prob:main}, ensures that all non-linear predictions based on $\mZ$ are uncorrelated with the sensitive features $S$. The setting is summarized in Figure \ref{fig:dependence_bound}(a).

\paragraph{Projection onto $\mathcal{M}$.}
If $\mZ$ lies in the image of $\phi$ and $\cH$ is a `large' RKHS then $\cH$-independence also implies complete independence between the estimator $\langle \hat h, \mZ\rangle$ and $S$. To see this, assume that there exists a random variable 
$W:\Omega \rightarrow \mathbb{X}$ such that $\mZ = \phi(W)$ and that the RKHS is \textit{characteristic}.  Since for any $f\in \cH$ and bounded measurable $g:\mathbb{S}\rightarrow \mathbb{R}$
\begin{align*}
E(f(W) \times  g(S)) &= E( \langle f, \mZ \rangle \times g(S))  
= E \langle f, \mZ \rangle \cdot E(g(S))
= E( f(W) ) \cdot E( g(S))
\end{align*}
we can deduce that $W$ and $S$ is independent. Moreover, since $\mZ$ is a function of $W$  it is also independent of $S$. 
In general, $\mZ$ will not be representable as some $\phi(W)$ and there can be dependencies between $\langle \hat h, \mZ \rangle$ and $S$. However, if $\mZ$ attains values close to the manifold $\mathcal{M}$ then we can find a random variable $W$ such that $\phi(W)$ is close to $\mZ$ and the dependence between $\phi(W)$ and 
$S$ is controlled by how close $\mZ$ is to the manifold.

Showing that a suitable $W$ exists is not trivial; the difficulty is that for values that $\mZ$ might attain in $\cH$ there can be many points on the manifold closest to that value and selecting points on the manifold in a way that makes the random variable $W$ well defined needs a result on \textit{measurable selections}. The following proposition makes use of such a selection and guarantees the existence of a suitable $W$, i.e. 
it states that there exists a random variable $W$ such that $\phi(W)$ achieves the minimal distance to $\mZ + h^*$.
\begin{prop}\label{prop:multival}
Consider $\mZ \in \mathcal{\altL}^2(\Omega,\mathcal{A},P;\cH)$, assume that the kernel function is continuous and (strictly) positive-definite, and $\mathbb{X}$ is compact. For any $h^* \in \cH$ there exists a $\sigma(\mZ)$-measurable  random variable $W$ which attains values in $\mathbb{X}$ such that $\phi(W) \in \mathcal{\altL}^2(\Omega,\mathcal{A},P;\cH)$ and 
\[\bn \mZ + h^* -  \phi(W)  \bn_2 = d(\mZ + h^*,\mathcal{M}).\] 
\end{prop}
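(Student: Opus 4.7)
The plan is to reduce the statement to a measurable selection problem. Write $\mU = \mZ + h^*$, which lies in $\mathcal{\altL}^2(\Omega,\mathcal{A},P;\cH)$ and is $\sigma(\mZ)$-measurable. First, continuity of $k$, via the identity $\|\phi(x)-\phi(x')\|^2 = k(x,x)-2k(x,x')+k(x',x')$, implies that $\phi:\mathbb{X}\rightarrow\cH$ is continuous, so $\mathcal{M}=\phi[\mathbb{X}]$ is the continuous image of a compact set and hence compact in $\cH$, with $\sup_{x\in\mathbb{X}}\|\phi(x)\|<\infty$. For each $\omega$ the infimum defining $d(\mU(\omega),\mathcal{M})$ is therefore attained on $\mathbb{X}$, so the set-valued map $\Psi:\Omega\rightarrow 2^{\mathbb{X}}$ defined by
\[
\Psi(\omega)\;=\;\argmin_{x\in\mathbb{X}}\|\mU(\omega)-\phi(x)\|
\]
has non-empty closed values.

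Next I would verify that $\Psi$ is $\sigma(\mZ)$-measurable as a correspondence. Since $\mU$ is Bochner measurable and $(u,x)\mapsto\|u-\phi(x)\|$ is jointly continuous on $\cH\times\mathbb{X}$, the map $(\omega,x)\mapsto\|\mU(\omega)-\phi(x)\|$ is a Carath\'eodory function, and in particular $\sigma(\mZ)\otimes\mathcal{B}(\mathbb{X})$-measurable; consequently $\omega\mapsto d(\mU(\omega),\mathcal{M})$ is $\sigma(\mZ)$-measurable. A standard argument, either via the Measurable Maximum Theorem or by using second-countability of $\mathbb{X}$ to show that $\{\omega:\Psi(\omega)\cap U\neq\emptyset\}\in\sigma(\mZ)$ for every open $U\subseteq\mathbb{X}$, then gives measurability of $\Psi$. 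The Kuratowski--Ryll-Nardzewski measurable selection theorem, applicable because $\mathbb{X}$ is Polish and $\Psi$ is a measurable closed-valued correspondence, supplies a $\sigma(\mZ)$-measurable selector $W:\Omega\rightarrow\mathbb{X}$ with $W(\omega)\in\Psi(\omega)$.

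The composition $\phi(W)$ is then $\sigma(\mZ)$-measurable (continuous $\phi$ composed with measurable $W$) and uniformly bounded by $\sup_{x\in\mathbb{X}}\|\phi(x)\|$, hence belongs to $\mathcal{\altL}^2(\Omega,\mathcal{A},P;\cH)$. Pointwise $\|\mU(\omega)-\phi(W(\omega))\| = d(\mU(\omega),\mathcal{M})$ by construction, while any competing measurable $\mathbb{X}$-valued $W'$ satisfies $\|\mU(\omega)-\phi(W'(\omega))\| \ge d(\mU(\omega),\mathcal{M})$ pointwise; squaring and integrating yields $\bn \mZ+h^*-\phi(W)\bn_2 \le \bn \mZ+h^*-\phi(W')\bn_2$, identifying $\bn \mZ+h^*-\phi(W)\bn_2$ with $d(\mZ+h^*,\mathcal{M})$ in the announced $\mathcal{\altL}^2$ sense. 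The principal obstacle is measurability of the argmin correspondence $\Psi$: the argmin of a Carath\'eodory function is not automatically measurable in general settings, and it is precisely compactness of $\mathbb{X}$ together with continuity of $\phi$ (inherited from continuity of $k$) that permits invoking a classical measurable selection theorem rather than constructing a selector by hand.
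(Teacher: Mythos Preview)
Your argument is correct and complete. Both you and the paper reduce the claim to a measurable selection problem, but the toolkits differ. The paper works on the image side: it defines the metric projection $\pi:\cH\rightrightarrows\mathcal{M}$, proves by a direct geometric argument that $\pi$ is upper-semicontinuous with non-empty compact values (hence \emph{usco-compact}), and then invokes results on the Vietoris topology to obtain a Borel selector into $\mathcal{M}$; finally it uses that $\phi$ is a homeomorphism (this is where strict positive-definiteness enters) to pull the selector back to $\mathbb{X}$ via $\phi^{-1}$. You instead work on the domain side: you set up the argmin correspondence $\Psi:\Omega\rightarrow 2^{\mathbb{X}}$ directly, observe that $(\omega,x)\mapsto\|\mU(\omega)-\phi(x)\|$ is Carath\'eodory with the constraint set constant and compact, and apply the Measurable Maximum Theorem / Kuratowski--Ryll-Nardzewski. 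Your route is arguably more economical---it bypasses the explicit upper-semicontinuity computation and never needs to invert $\phi$, so injectivity of $\phi$ is not actually used in your selection step---while the paper's route is more self-contained in that the upper-semicontinuity of the metric projection is verified from first principles rather than packaged inside a black-box optimization theorem.
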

\begin{proof}
Proof is provided in Appendix \ref{app:multival}.
\end{proof}
We will call such a variable $W$ provided by the proposition \textit{a projection of $\mZ$ on $\mathcal{M}$}.
The variable $W$ can be approximated algorithmically for a given $\mZ$ and $h^*$  (see Appendix \ref{suppl:alg_W}). Furthermore, $\phi(W)$ is a good  approximation of $\phi(X)$ whenever $\mZ$ is, as
\[
\|\phi(W(\omega)) - \phi(X(\omega)) \| \leq \|\phi(W(\omega)) - \mZ(\omega)\| + \|\mZ(\omega)- \phi(X(\omega)) \| 
\leq 2 \|\mZ(\omega)- \phi(X(\omega)) \|,
\]   
where we used that $\phi(W(\omega))$ is closest to $\mZ(\omega)$ on $\mathcal{M} = \phi[\mathbb{X}]$. Therefore, 
\[
\bn \phi(W) - \phi(X) \bn_2 \leq 2 \bn \mZ- \phi(X) \bn_2.
\]

\section{Bounding the dependencies} \label{sec:bounds_on_dep}
A common approach to quantifying the dependence between random variables is to consider 
\[
|P(A\cap B) - P(A) P(B)|
\]
where $A$ and $B$ run over suitable families of events. In our setting, these families are the $\sigma$-algebras $\sigma(\mZ)$ (or, alternatively, $\sigma(W)$) and $\sigma(S)$, and the difference between $P(A\cap B)$ and $P(A) P(B)$, $A \in \sigma(\mZ)$ or $A \in \sigma(W), B\in \sigma(S)$, quantifies the dependence between the random variables $\mZ$ and $S$, and $W$ and $S$, respectively. Upper bounds on the absolute difference of these two quantities are related to the notion of $\alpha$-dependence which underlies $\alpha$-mixing. In times-series analysis mixing conditions like $\alpha$-mixing play a significant role since they provide means to control temporal dependencies (see, e.g., \citep{BRAD07,DOUK94}). The aim of this section is to show how the notion of $\cH$-independence is related to the dependence between the random variables.
In particular, Proposition~\ref{prop:alpha_dep} below states a bound on the dependence between $W$ and $S$ in terms of the distance of $\mZ$ to the manifold $\mathcal{M}$. More exactly, we allow $\mZ$ to be translated by $h^* \in \cH$ before measuring the distance. This is important because the manifold itself can lie away from the origin while  the $\mZ$ we construct in Section  \ref{sec:best_ind_feat} lies around the origin. The distance we consider is 
\[
d^2(\mZ + h^*, \mathcal{M}) = E( \inf_{h \in \mathcal{M}} \|\mZ + h^* - h\|^2),
\]
the average Hilbert space distance between $\mZ +h^*$ and the manifold. Observe that the expectation on the right side is well defined when $\mathcal{M}$ is compact since we can then replace $\mathcal{M}$ with a countable dense subset of $\mathcal{M}$.  

Furthermore, if $\mZ$ and $\phi(W)$ are closely coupled in the sense
that there exists a constant $c$ such that for any event $A_1 \in \sigma(\mZ)$ there exist an event $A_2 \in \sigma(W)$ fulfilling $P(A_1 \triangle A_2) \leq c$ then the dependence between $\mZ$ and $S$ can also be bounded. For the bound to be useful we want a small value of $c$ for which the above holds, e.g. if we let $c=1$ then the above holds trivially but the bound we provide below becomes vacuous. 
In this context, observe that $W$, as constructed above, is a function of $\mZ$ and we know that $\sigma(W) \subset \sigma(\mZ)$. However, the opposite inclusion is not guaranteed to hold.

Coming back to bounding the dependence between $W$ and $S$: the high level idea is that $\cH$-independence would correspond to normal independence if we had function evaluations `$h(Z)$' instead of inner products $\langle h,\mZ \rangle$ (given that $\cH$ is sufficient to approximate indicator functions). While generally there is no such expression for the inner product we know that for $\phi(W)$ we actually have the equivalence $\langle h,\phi(W) \rangle = h(W)$ due to the reproducing property of the kernel function. In contrast to $\mZ$  the random variable $\phi(W)$ does not need to be $\mathcal{H}$-independent  of $S$, however, if $\mZ + h^*$ and $\phi(W)$ are not too far from each other in $\|\cdot \|_2$-norm then $\phi(W)$ will be approximately $\mathcal{H}$-independent of $S$ and we can say something about the dependence between $W$ and $S$.  
Therefore, the bound below is stated in terms of $\|\mZ +h^* - \phi(W) \|_2$, which is equal to the distance between $\mZ +h^*$ and $\mathcal{M}$, and a measure of how well indicator functions can be approximated. More specifically, the bound is controlled by the functional 
\begin{equation} \label{eq:psi}
\psi(A) = \inf_{f \in \cH} 2\| \chi A(W) - f(W)\|_2 + \|f\| \, d(\mZ + h^*,\mathcal{M}),  
\end{equation}
where $A \in \{ W[C] : C \in \sigma(W)\}$ and $f$ has to balance between approximating the indicator function while keeping $\|f\| \, d(\mZ + h^*,\mathcal{M})$ small. The function $\psi$ has a natural interpretation as the minimal error that can be achieved in a regularized interpolation problem. If $\cH$ lies dense in a certain space, then any relevant indicator can in principle be approximated arbitrary well. This is not saying that $\psi(A)$ will be small since the norm of the element that approximates the indicator might be large. But the approximation error, which is $\|\chi A(W) - f(W)\|_2$, can be made arbitrary small. With this notation in place the proposition is as follows.

\begin{prop} \label{prop:alpha_dep}
Consider a $\mZ \in \mathcal{\altL}^2(\Omega,\mathcal{A},P;\cH)$ which is $\cH$-independent from $S$, suppose that the kernel function is continuous and (strictly) positive-definite, and $\mathbb{X}$ is compact.
Let $W$ be a projection of $\mZ$ on $\mathcal{M}$. For any $A\in \sigma(W)$ and $B\in \sigma(S)$, with $A' = W[A]$ being the image of $A$ under $W$, the following holds,
\[
|P(A \cap B) - P(A) P(B) | \leq \psi(A').
\]
Furthermore, for $A \in \sigma(\mZ)$, if $c>0$ is such that $\mathcal{B}_A = \{W[C] : C\in \sigma(W), P(C\triangle A) \leq c\}$ is non-empty then for any $B\in\sigma(S)$, 
\begin{align*}
&|P(A \cap B) - P(A)P(B)| \leq 2 c + \inf_{D \in \mathcal{B}_A} \psi(D).
\end{align*}  
\end{prop}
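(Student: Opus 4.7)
My plan for Part 1 is to approximate the indicator of $A' = W[A]$ by an RKHS element, transfer from $\phi(W)$ to $\mZ + h^*$ via the reproducing property, and then invoke $\cH$-independence of $\mZ$ from $S$. First I would observe that for $A\in\sigma(W)$ one has $A = W^{-1}(A')$, so $\chi A = \chi A'(W)$ almost surely. For any $f\in\cH$, setting $\eta = \chi A'(W) - f(W)$, I would write
\[
P(A\cap B) - P(A)P(B) = \bigl[E(f(W)\chi B(S)) - E(f(W))E(\chi B(S))\bigr] + \bigl[E(\eta\chi B(S)) - E(\eta)E(\chi B(S))\bigr].
\]
The second bracket is bounded by $2\|\eta\|_2 = 2\|\chi A'(W) - f(W)\|_2$ via the triangle inequality $|E(\eta\chi B)| + |E(\eta)|E(\chi B) \leq 2\|\eta\|_2$. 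For the first bracket I would use $f(W) = \langle f,\phi(W)\rangle$ and the decomposition $\phi(W) = (\mZ + h^*) + \Delta$ with $\Delta = \phi(W) - (\mZ + h^*)$; the constant $\langle f,h^*\rangle$ drops out of the covariance, and $\cH$-independence of $\mZ$ from $S$ kills the $\langle f,\mZ\rangle$ contribution, leaving only the covariance of $\langle f,\Delta\rangle$ with $\chi B(S)$. Cauchy–Schwarz bounds this by $\|f\|\cdot\bn\Delta\bn_2$, and Proposition \ref{prop:multival} identifies $\bn\Delta\bn_2$ with $d(\mZ + h^*,\mathcal{M})$. Taking the infimum over $f\in\cH$ of $2\|\chi A'(W)-f(W)\|_2 + \|f\|\,d(\mZ+h^*,\mathcal{M})$ gives precisely $\psi(A')$.

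For Part 2 the plan is to approximate an event in $\sigma(\mZ)$ by one in $\sigma(W)$ and invoke Part 1. Given $D\in\mathcal{B}_A$ there exists $C\in\sigma(W)$ with $D = W[C]$ and $P(C\triangle A)\leq c$. Since $(A\cap B)\triangle(C\cap B)\subseteq A\triangle C$, both $|P(A\cap B) - P(C\cap B)|$ and $|P(A) - P(C)|\cdot P(B)$ are at most $c$. Part 1 applied to $C\in\sigma(W)$ yields $|P(C\cap B) - P(C)P(B)|\leq\psi(W[C]) = \psi(D)$. The triangle inequality then delivers $|P(A\cap B) - P(A)P(B)|\leq 2c + \psi(D)$, and infimizing over $D\in\mathcal{B}_A$ concludes.

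The delicate step I anticipate is the measurability of $A' = W[A]$, since images of measurable sets need not be measurable in general. Under compactness of $\mathbb{X}$ and continuity of the kernel, $\phi$ (and hence $W$ through Proposition \ref{prop:multival}) is well-behaved enough to let us work with analytic or universally measurable sets, or else to exploit the fact that any $A\in\sigma(W)$ is of the form $W^{-1}(C)$ for some Borel $C\supseteq W[A]$, so that $\chi A'$ can be treated as a genuine integrand in $L^2(\sigma(W))$. Once this technicality is dispatched, the rest is a short chain of Cauchy–Schwarz estimates tied together by a single application of $\cH$-independence to $\langle f,\mZ\rangle$.
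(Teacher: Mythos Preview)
Your proposal is correct and follows essentially the same route as the paper's proof: approximate $\chi A'(W)$ by $f(W)$, pass to $\langle f,\mZ\rangle$ via the reproducing property (the constant $\langle f,h^*\rangle$ drops out of the covariance), invoke $\cH$-independence, and control the residual $\langle f,\phi(W)-\mZ-h^*\rangle$ by Cauchy--Schwarz together with $\bn\phi(W)-\mZ-h^*\bn_2=d(\mZ+h^*,\mathcal{M})$; Part~2 is handled identically in both. Your measurability worry dissolves once you observe that for $A=W^{-1}(B)\in\sigma(W)$ one has $W[A]=B\cap W[\Omega]$, hence $\chi A'(W)=\chi A$ as a function on $\Omega$, so $\chi A'(W)$ is automatically measurable and no Borel structure on $A'\subset\mathbb{X}$ is ever needed.
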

\begin{proof}
Proof is provided in Appendix \ref{app:proof_dep}.
\end{proof}
Intuitively, as visualized in Figure \ref{fig:dependence_bound}, the proposition states that if $\mZ$ mostly attains values in the gray area then the dependence between $W$ and $S$ is low and, if $W$ and $\mZ$ is strongly coupled, then the dependence between 
$\mZ$ and $S$ is also low.

\subsection{Estimating $\psi(A)$}
The key quantity  in Proposition \ref{prop:alpha_dep} is $\psi(A)$. To control $\psi(A)$ it is necessary to control how well the RKHS can approximate indicators and to estimate the distance $d(\mZ + h^*,\mathcal{M})$. The former problem is more difficult and might be approached using the theory of interpolation spaces; we do not try to develop the necessary theory here but only mention a simple result on denseness at the end of this section. On the other hand, the latter problem is easy to deal with:  
the distance $d(\mZ + h^*,\mathcal{M})$ between $\mZ + h^*$ and $\mathcal{M}$ can be estimated efficiently.
In the case where the space $\mathbb X$ is compact and $\phi$ is a continuous function, we propose an empirical estimate of $d(\mZ + h^*,\mathcal{M})$ given by
\begin{equation}\label{eq:dnestim}
d_n(\mZ+h^*,\cM):=\frac{1}{n}\sum_{i=1}^n \min_{h \in \cM} \|\mZ_i+h^*-h\|
\end{equation}
where ${\mZ}_i,~i  \leq n,~n \in \mathbb N $, are $n$ independent copies of $\mZ$.
Note that the compactness of $\mathbb X$ together with the continuity of $\phi$ make the $\min$ operator in \eqref{eq:dnestim} well-defined.  
\begin{prop}\label{prop:dist_chaining}
Consider a $\mZ \in \mathcal{\altL}^2(\Omega,\mathcal{A},P;\cH)$ which is $\cH$-independent from $S$, suppose that the kernel function is continuous and (strictly) positive-definite, and $\mathbb{X}$ is compact. 
Let $\rho = \max_{x \in \mathbb{X}} \|\phi(x)\| < \infty$. For any $h^* \in \cH$ with $\|h^*\| \leq \rho$ and every $\epsilon >0$ we have,
\begin{equation*}
\Pr(|d_n(\mZ+h^*,\cM) - d(\mZ+h^*,\cM)| \geq \epsilon) \leq  2\exp\Bigl(-\frac{2n \epsilon^2}{25 \rho^2}\Bigr).
\end{equation*}
\end{prop}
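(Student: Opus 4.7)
The strategy is to recognize that $d_n(\mZ+h^*,\cM)$ is an empirical mean of i.i.d. bounded non-negative random variables, and then apply Hoeffding's inequality. The only non-mechanical work is to argue a deterministic, $\rho$-explicit upper bound on the summands.

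First I would set $Y_i := \min_{h \in \cM} \|\mZ_i + h^* - h\|$ so that $d_n(\mZ+h^*,\cM) = \frac{1}{n}\sum_{i=1}^n Y_i$. Because $\mathbb X$ is compact and $\phi$ is continuous, $\cM = \phi[\mathbb X]$ is compact in $\cH$, so the $\min$ is attained and the map $\mZ \mapsto \min_{h\in\cM}\|\mZ + h^* - h\|$ is continuous (hence measurable). Since $\mZ_1,\dots,\mZ_n$ are independent copies of $\mZ$, the $Y_i$ are i.i.d. with common mean $E[Y_i] = E[\min_{h\in\cM}\|\mZ + h^*-h\|] = d(\mZ+h^*,\cM)$ (using the interpretation of $d$ implicit in the statement, namely the expected minimum distance).

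Second, I would establish a uniform upper bound $Y_i \leq 5\rho$. Every $h \in \cM$ is of the form $\phi(x)$ for some $x \in \mathbb X$, so $\|h\| \leq \rho$ by definition of $\rho$. Fixing any anchor $h_0 \in \cM$ and applying the triangle inequality,
\[
Y_i \leq \|\mZ_i + h^* - h_0\| \leq \|\mZ_i\| + \|h^*\| + \|h_0\| \leq \|\mZ_i\| + 2\rho.
\]
In the setting of the paper the relevant $\mZ$ is the residual $\phi(X) - E^S\phi(X)$ from Section \ref{sec:best_ind_feat}, for which $\|\mZ\| \leq 2\rho$ a.s.\ by the triangle inequality together with the contraction property of Hilbert-space-valued conditional expectation, and the looser bound $Y_i \leq 5\rho$ then absorbs any constants. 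In particular $Y_i \in [0,5\rho]$ a.s.

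Third, I would invoke Hoeffding's inequality for i.i.d.\ random variables supported in $[0,5\rho]$:
\[
\Pr\bigl(|d_n(\mZ+h^*,\cM) - d(\mZ+h^*,\cM)| \geq \epsilon\bigr) \leq 2\exp\!\left(-\frac{2n\epsilon^2}{(5\rho)^2}\right) = 2\exp\!\left(-\frac{2n\epsilon^2}{25\rho^2}\right),
\]
which is exactly the stated bound. The main obstacle is the uniform norm bound on $\mZ$: without essential boundedness of $\|\mZ\|$ one cannot apply Hoeffding directly, and one would instead need a sub-Gaussian or bounded-difference surrogate argument. Under the compactness of $\mathbb X$ and the construction of $\mZ$ used throughout the paper, this a.s.\ bound holds, and the rest of the argument is routine.
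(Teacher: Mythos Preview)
Your proposal is correct and follows essentially the same route as the paper's own proof: bound $\|\mZ\|$ using the explicit construction of $\mZ$, deduce $Y_i\in[0,5\rho]$, and apply Hoeffding. One small slip: the paper's $\mZ$ is $\phi(X)-E^S\phi(X)+E(\phi(X))$, not the bare residual $\phi(X)-E^S\phi(X)$; with the extra mean term the bound is $\|\mZ\|\le 3\rho$ (via $\|E^S\phi(X)\|\le E^S\|\phi(X)\|\le\rho$ and $\|E\phi(X)\|\le\rho$), and then $Y_i\le 3\rho+\|h^*\|+\|h_0\|\le 5\rho$ on the nose, which is exactly where the $25\rho^2$ comes from rather than a deliberately loosened constant.
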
 
\begin{proof}
Proof is provided in Appendix~\ref{app:dist_chaining}. 
\end{proof}
Coming back to the approximation error 
$\| \chi A(W) - f(W)\|_2$,  where $A \subset \mathbb{X}$ is the image under $W$ of some set $C\in \sigma(W)$ and $f\in \cH$ we like to mention the following: 
let $\nu = P W^{-1}$ be the push-forward measure of $P$ under $W$.
If $\cH$ lies dense in $\mathcal{\altL}^2(\mathbb{X},\mathcal{B},\nu)$ then for any such $A$ and any $\epsilon>0$ there exists a function $f$ such that $\| \chi A(W) - f(W)\|_2 < \epsilon$, i.e. for the measurable set $A$ there exists a function $f \in \cH$ such that 
\begin{align*}  
\int (\chi A(W) - f(W))^2 \,dP = \int (\chi A(x) - f(x))^2 \, d\nu(x) < \epsilon^2,  
\end{align*}
using \cite[Theorem 235Gb]{FREM}. In many cases the continuous functions $C(\mathbb{X})$ lie dense in $\mathcal{\altL}^2(\mathbb{X},\mathcal{B},\nu)$ and a universal RKHS $\cH$ is sufficient to approximate the indicators $\chi W$ (see \cite{BHAR11}).

\section{Best $\cH$-independent features} \label{sec:best_ind_feat}
In this section we discuss how to obtain $\mZ$ as a closed-form solution to Problem~\ref{prob:main}. To this end, 
inspired by the sub-problem in the linear case, we obtain $\mZ$ using Hilbert-space-valued conditional expectations. We further show that these features are $\cH$-independent of $S$ and that $\mZ$ is the best $\cH$-independent approximation of $\phi(X)$.

In the linear case discussed in the Introduction it turned out that $Z = X - \E^S X  + E X$ is a good candidate for the new features $Z$. In the Hilbert-space-valued case a similar result holds. The main difference here is that we do have to work with Hilbert-space-valued conditional expectations. 
For any random variable $\mX\in \mathcal{\altL}^2(\Omega,\mathcal{A},P;\cH)$, and any
$\sigma$-subalgebra $\mathcal{B}$ of $\mathcal{A}$, conditional expectation $E^\mathcal{B} \mX$ is defined and is again an element of $\mathcal{\altL}^2(\Omega,\mathcal{A},P;\cH)$. 
We are particularly interested in conditioning with respect to the sensitive random variable $S$. In this case, $\mathcal{B}$ is chosen as $\sigma(S)$, the smallest $\sigma$-subalgebra which makes $S$ measurable, and we denote this conditional expectation by $E^S \mX$. In the following, we  use the notation $\mX = \phi(X) $. A natural choice for the new features is 
\begin{equation} \label{eq:new_feature_def}
\mZ = \mX - \E^S \mX + E(\mX).
\end{equation}
The expectation $E(\mX)$ is to be interpreted as the Bochner-integral of $\mX$ given measure $P$. Importantly, if $S$ and $\mX$ are independent, we have with this choice that $\mZ = \mX = \phi(X)$ and we are back to the standard kernel setting. Also, if $\mX \in \mathcal{\altL}^2(\Omega,\mathcal{A},P;\cH)$ then so is $\mZ$.

We can verify that the features $\mZ$ are, in fact, $\cH$-independent of $S$.  In particular, for any $h \in \cH$ and $g\in \mathcal{L}^2$,
\begin{align*}
&E (\langle \mX - E^S \mX, h\rangle \times g(S) ) \\
&= \langle E(  \mX \times g(S) ) - E( (E^S \mX) \times g(S)), h \rangle  \\
&= \langle E(  \mX \times  g(S) ) - E( E^S (\mX \times g(S)) ), h \rangle =0.
\end{align*}
Since $E(\mX)$ is a constant this implies that
$E(\langle \mZ,h \rangle \times g(S)) = E(h(X)) \cdot E(g(S))$ A similar argument shows that $E\langle \mZ,h \rangle =E(h(X))$. Thus, $\mZ$ is $\cH$-independent of $S$.

\begin{figure*}[t]
\begin{center}
\includegraphics[scale=0.3]{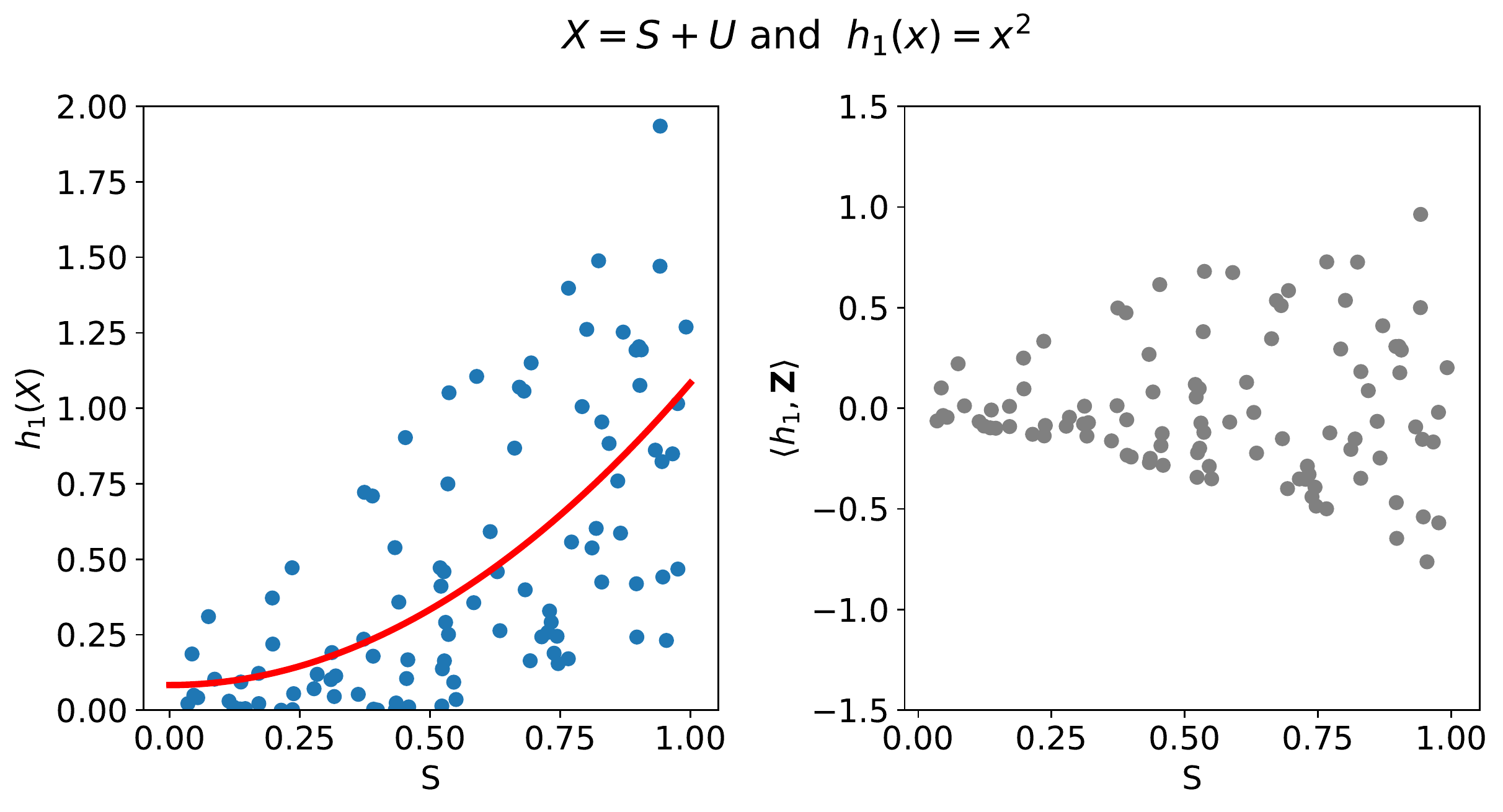}
\hspace{0.1cm}
\includegraphics[scale=0.3]{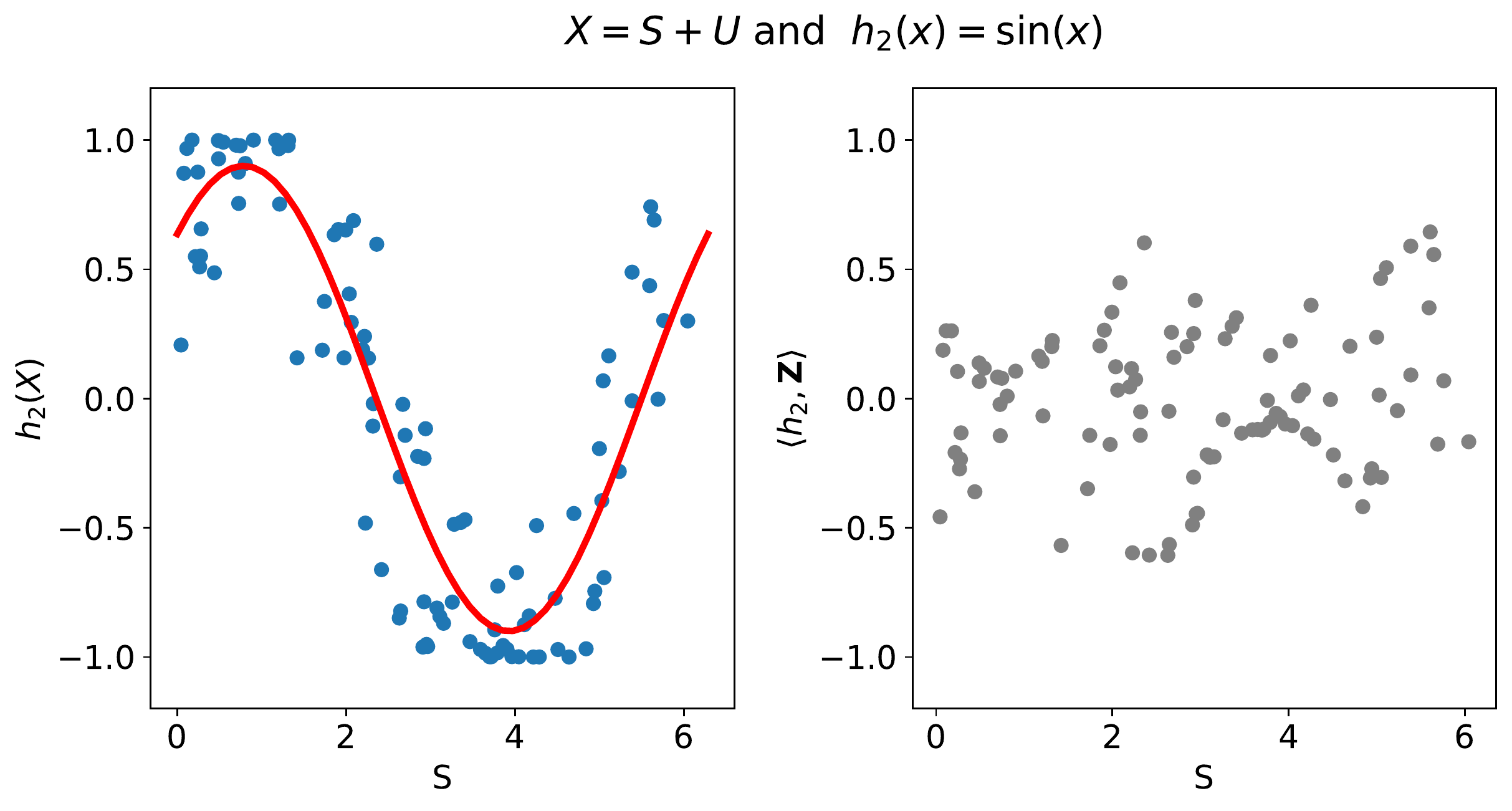}
\end{center}
\caption{The figure shows data from two different settings. In the left two plots $X = S+ U$, where $S$ and $U$ are independent, $S$ is  uniformly distributed on $[0,1]$ and $U$ is uniformly distributed on $[-1/2,1/2]$. The function $h_1$ is the quadratic function. The leftmost plot shows $h_1(X)$ against $S$ and the plot to its right shows a centered version of $\langle h_1,\mZ \rangle$  plotted against $S$. Similarly for the right plots with the difference that $S$ is uniformly distributed on $[0,2\pi]$ and $U$ is uniformly distributed on  $[0,\pi/2]$. The function $h_2(x)$ is $\sin(x)$. The red curves show the best regression curve, predicting $h_1(X)$ and $h_2(X)$ using $S$.}
\label{fig:uncorr}
\end{figure*}

In Figure \ref{fig:uncorr} the effect of the move from $\mX$ to $\mZ$ is visualized. In the figure $S$ is plotted against $h_1(X)$ and $h_2(X)$ (blue dots), where $h_1$ corresponds to the quadratic function and $h_2$ to the sinus function. The dependencies between $h_1(X)$ and $S$, as well as $h_2(X)$ and $S$, are high and there is clear trend in the data. The two red curves correspond  to the best regression functions, using $S$ to predict $h_1(X)$ and $h_2(X)$. The relation between the new features and $S$ is shown in the other two plots (gray dots). In the case of $h_1$ one can observe that the dependence between $\langle h_1, \mZ \rangle$ and $S$ is much smaller and, by the design of $\mZ$, $\langle h_1, \mZ \rangle$ and $S$ are uncorrelated. Similarly, for $\langle h_2,\mZ\rangle$, whereas here the dependence to $S$ seems to be even lower and it is difficult to visually verify any remaining dependence between $S$ and $\langle h_2,\mZ\rangle$.

An interesting aspect of this transformation from $X$ to $\mZ$ is that $\mZ$ is automatically uncorrelated with $S$ for all functions $h$ in the corresponding RKHS, without the need to ever explicitly consider a  particular $h$. 
Besides being $\cH$-independent of $S$ these new features $\mZ$ also closely approximates our original features $\mX$ if the influence from $S$ is not too strong, i.e. the mean squared distance is 
\[
 E(\| \mX - \mZ\|^2 ) = \E(\| \E^S \mX - E(\mX) \|^2)
\]
which is equal to zero if $X$ is independent of $S$. In fact, $\mZ$ is the best approximation of $\mX$ in the mean squared sense under the $\cH$-independent constraint. This is  essentially a property of the conditional expectation which corresponds to an orthogonal projection in $L^2(\Omega,\mathcal{A},P;\cH)$. We summarize this property in the following result.
\begin{prop} \label{lem:best_approx}
Given $\mX,\mZ' \in \mathcal{\altL}^2(\Omega,\mathcal{A}, P;\cH)$ such that $\mZ'$ is $\cH$-independent of $S$, then  
\[
E(\|\mX - \mZ'\|^2)  \geq E( \| \mX - \mZ \|^2 ),
\]
where $\mZ = \mX - E^S \mX + E(\mX)$. Furthermore, $\mZ$ is the unique minimizer (up to almost sure equivalence). 
\end{prop}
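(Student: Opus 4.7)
The plan is to interpret the claim as an orthogonal projection statement in the Hilbert space $L^2(\Omega,\mathcal{A},P;\cH)$ and to read off both the inequality and the uniqueness from the Pythagorean identity. To this end, I would write
\[
\mX - \mZ' \;=\; (\mX - \mZ) \;+\; (\mZ - \mZ'), \qquad \mX - \mZ \;=\; E^S\mX - E(\mX),
\]
and aim to show that the two summands are orthogonal in $\bl\cdot,\cdot\br$. Once orthogonality is established, the Pythagorean identity
\[
\bn \mX - \mZ' \bn_2^{2} \;=\; \bn \mX - \mZ \bn_2^{2} \;+\; \bn \mZ - \mZ' \bn_2^{2}
\]
immediately yields the desired inequality, and for the uniqueness clause forces $\bn \mZ - \mZ' \bn_2 = 0$ whenever equality holds.

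The technical heart of the argument is the following vector-valued lemma: if $\mW \in \mathcal{L}^2(\Omega,\mathcal{A},P;\cH)$ is $\cH$-independent of $S$, then $E^S\mW = E(\mW)$ almost surely in $\cH$. To prove this, I would fix $h \in \cH$ and observe that Definition~\ref{def:ind_inner} makes the scalar $\langle h,\mW\rangle$ uncorrelated with every bounded $\sigma(S)$-measurable function, which by a standard monotone-class argument is equivalent to $E^S\langle h,\mW\rangle = E\langle h,\mW\rangle$ almost surely. Combined with the commutation property of Bochner conditional expectation with continuous linear functionals, $\langle h, E^S\mW\rangle = E^S\langle h,\mW\rangle$, this gives $\langle h, E^S\mW - E(\mW)\rangle = 0$ almost surely for each $h$. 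Letting $h$ range over a countable dense subset of $\cH$ then eliminates the exceptional null sets simultaneously and yields the claimed identity.

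Given the lemma, the orthogonality computation is direct. Since $\mZ$ was shown to be $\cH$-independent of $S$ in the discussion preceding the proposition, and $\mZ'$ is $\cH$-independent of $S$ by assumption, their difference $\mZ - \mZ'$ is $\cH$-independent of $S$, and the lemma yields $E^S(\mZ - \mZ') = E(\mZ - \mZ')$ a.s. Using that $E^S\mX - E(\mX)$ is $\sigma(S)$-measurable together with the vector-valued tower property,
\begin{align*}
\bl \mX - \mZ,\; \mZ - \mZ' \br
&= E\bigl\langle E^S\mX - E(\mX),\; \mZ - \mZ' \bigr\rangle \\
&= E\bigl\langle E^S\mX - E(\mX),\; E^S(\mZ - \mZ') \bigr\rangle \\
&= \bigl\langle E(E^S\mX) - E(\mX),\; E(\mZ - \mZ') \bigr\rangle \;=\; 0,
\end{align*}
where the last equality uses $E(E^S\mX) = E(\mX)$. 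I expect the main obstacle to be the careful justification of the vector-valued lemma, since the exceptional null sets obtained for each $h$ must be merged into a single one; this works under mild separability of $\cH$ (inherited from a measurable kernel on a reasonable underlying space), after which everything reduces to linearity and the tower property of Bochner conditional expectation.
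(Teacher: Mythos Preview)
Your proposal is correct and lands on the same Pythagorean decomposition $\bn\mX-\mZ'\bn_2^2 = \bn\mX-\mZ\bn_2^2 + \bn\mZ-\mZ'\bn_2^2$ as the paper, but the route to the vanishing of the cross term $\bl \mX-\mZ,\, \mZ-\mZ'\br = 0$ is different. The paper establishes the scalar identity $\bl E^S\mX,\, \mZ'\br = \bl E(\mX),\, \mZ'\br$ by approximating the $\sigma(S)$-measurable element $E^S\mX$ with $\sigma(S)$-simple functions and invoking the $\cH$-independence of $\mZ'$ term by term; the cross term then collapses after a short expansion that also uses $\bl\mX,\,E^S\mX\br=\bn E^S\mX\bn_2^2$. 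You instead isolate a structural lemma on the \emph{other} factor: $\cH$-independence of $\bm{W}$ forces $E^S\bm{W}=E(\bm{W})$ almost surely, obtained by scalarization plus a countable-dense-set argument in $\cH$. Your lemma is slightly stronger---it yields an almost-sure identity in $\cH$ rather than a single real equality---and it reduces the orthogonality computation to one application of the tower property; the paper's route keeps everything at the level of the $L^2(\cH)$ inner product and needs no union-of-null-sets step. Both arguments rely on the separability of $\cH$ already assumed throughout the paper and are of comparable length.
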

\begin{proof} 
Proof provided in  Appendix \ref{app:best_approx}.
\end{proof}

\paragraph{Change in predictions.}
When replacing $\mX$ by $\mZ$ we lose information (we reduce the influence of the sensitive features). An interesting question to ask is, `how much does the reduction in information change our predictions?' A simple way to bound the difference in predictions is as follows. Consider any $h \in \cH$, for instance corresponding to a regression function, then  
\begin{align*}
    &| h(X) - \langle h, \mZ \rangle | 
    \leq \|h\| \|\mX - \mZ\| 
    \leq \|h\| \| \E^S \mX  - E(\mX) \|
\end{align*}
where $\|E^S\mX - E(\mX)\|$ effectively measures the influence of $S$. Hence, the difference in prediction is upper bound by the norm of the predictor (here $h$) and a quantity that measures the dependence between $S$ and $\mX$.

\paragraph{Example.}
To demonstrate that the effect of the move from $X$ to $\mZ$ can be profound we consider the following 
fundamental example:  suppose that $X$ and $S$ are standard normal random variables with covariance $c\in[-1,1]$ and consider the \textit{linear kernel} 
$k(x,y)  = xy$, $x,y\in \mathbb{R}$. In this case
$\phi(X) = X$ and $E^S X = c S$  is also normally distributed (see  \citet{BERT01}[Sec4.7]). Hence, $\mZ = X - E^S X+ E(X)$ is normally distributed and 
$E(\mZ \times S) = c - c E(S^2) = 0$. This implies that $\mZ$ and $S$ are, in fact, \textit{fully independent}, regardless of how large the dependence between the original features $X$ and the sensitive features $S$ may be. In the case where $X$ and $S$ are fully dependent, i.e. $X=aS$ for some $a\in \mathbb{R}$, the features $\mZ$ are equal to zero   and do not approximate $X$.

Next, consider a \textit{polynomial kernel} of second order such that the quadratic function $h(x) = x^2$ lies within the corresponding RKHS. The inner product between this $h$ and $\mZ$ is equal to $X^2 - E^S X^2 +E(X^2)$  and \textit{is not independent} of $S$. Hence, the \textit{kernel function affects the dependence} between $\mZ$ and $S$. Also, within the same RKHS there lie linear functions  and for any linear function $h'$ it holds that $\langle \mZ, h' \rangle$ is independent of $S$. Therefore, within the same RKHS we can have directions in which $\mZ$ is independent of $S$ and directions where both variables are dependent. 

\section{Generating oblivious features from data} \label{sec:est_cond_exp}
To be able to generate the features $\mZ$ we need to first estimate the conditional expectation $E^S \phi(X)$ from data. To this end, we devise a plugin-approach. After introducing this approach in Section \ref{sec:plugin_est} we discuss how the estimation errors of the plugin-estimator can be controlled in Section \ref{sec:est_error}. In Section \ref{sec:gen_features} we show how the oblivious features can be generated. Finally, in Section \ref{sec:obl_reg}, we demonstrate how the approach can be applied to statistical problems and we discuss relations to the approach of \citet{DON18}  in Section \ref{sec:disc_don}.

\subsection{Plug-in estimator} \label{sec:plugin_est}
A common method for estimation is the plug-in approach whereby an unknown probability measure is replaced by the empirical measure. This approach is used in \citet{SG} for deriving estimators of conditional expectations. To see how the approach can be generalized to our setting, first observe that we can write 
\begin{equation} \label{eq:cond_exp_g}
E^S \mX = g\circ S \textit{\quad almost surely},
\end{equation}
where  $g:\mathbb{S} \rightarrow \cH$ is a Bochner-measurable function (see Appendix \ref{app:prob_in_H} and Lemma \ref{lem:cond_exp_g} for details). Our aim is to estimate this function $g$ from i.i.d. observations $\{(X_i,S_i)\}_{i\leq n}$.
For any subset $B$ of the range space $\mathbb{S}$ of the sensitive features define the empirical measure 
$
P_n(S\in B) = (1/n) \sum_{i=1}^n \delta_{S_i}(B),
$
where $\delta_{S_i}$ the Dirac measure with mass one at location $S_i$. We define  an estimate of the conditional expectation of $\mX$ given that the sensitive variable falls into a set $B$ by
\[
E_n(\mX |S \in B) = \frac{1}{n P_n(S \in B)} \sum_{i=1}^n \phi(X_i) \times \delta_{S_i}(B), 
\]
when $P_n(S\in B) > 0$ and through $E_n(\mX|S \in B) =0$ otherwise. Observe that for $h\in \cH$ we have,
\begin{align*}
&\bigl\langle h,  \frac{1}{n P_n(S \in B)} \sum_{i=1}^n \phi(X_i) \times \delta_{S_i}(B) \bigr\rangle  
= \frac{1}{n P_n(S \in B)} \sum_{i=1}^n h(X_i) \times \delta_{S_i}(B).
\end{align*}
We can also write this as $\langle h, E_n(\mX|S\in B)\rangle = E_n(h(X)|S\in B)$.
An estimate of the conditional expectation given $S$ is provided by
\[
E^S_n \mX = \sum_{B\in \wp_S}  E_n(\mX|S\in B) \times \chi \{ S\in B\},
\]
where $\wp_S$ is a finite partition of the range space $\mathbb{S}$ of $S$. A common choice for $\wp_S$ if $\mathbb{S}$ is the hypercube $[0,1]^d$, $d\geq 1$, are the dyadic sets.
Observe, that we can move inner products inside the conditional expectation $E_n^S \mX$ so that
$\langle h,E_n^S \mX  \rangle = E_n^S h(X)$, where $E_n^S h(X)$ is the empirical conditional expectation introduced in \cite{SG}.

\subsection{Controlling the estimation error} 
\label{sec:est_error}
The estimation error when estimating $E^S \mX$ using $E_n^S \mX$ is relatively easy to control thanks to the plug-in approach. Essentially, standard results concerning the empirical measure carry over to  conditional expectation estimates in the  real-valued case \citep{SG}. But through scalarization we can transfer some of  these results straight away to the Hilbert-space-valued case.  For instance, using $\phi(X)$ in place of $\mX$,
\begin{align*} 
&\| E_n(
\phi(X) | S \in B) - E(\phi(X) | S\in B)\|  \\
 &= \sup_{\|h\| \leq 1} |\langle E_n(\phi(X) | S \in B) - E(\phi(X) | S\in B), h \rangle| \\
 &= \sup_{\|h\| \leq 1}  |E_n(h(X) | S \in B) - E(h(X) | S\in B)|
\end{align*}
and bounds on the latter term are known.
Similarly, 
\begin{align}
\| E_n^S \phi(X) - E^S \phi(X)\|  
&= \sup_{\|h\|\leq 1} |E_n^S h(X) - E^S(h(X))|. \label{eq:uni_bound} 
\end{align}
However, both $E_n^S \phi(X)$ and $E^S \phi(X)$ are random variables and a useful measure of their difference is the $\mathcal{\altL}^2$-pseudo-norm. The  $\mathcal{\altL}^2$-pseudo-norm should in this case not be taken with respect to $P$ itself but conditional on the training sample. Hence, for i.i.d. pairs $(X,S), (X_1,S_1),\ldots, (X_n,S_n)$ let $\mathcal{F}_n = \sigma(X_1,S_1,\ldots, X_n,S_n)$ and define the `conditional' $\mathcal{\altL}^2$-pseudo-norm by
\[
\bn E_n^S \phi(X) - E^S \phi(X) \bn^2_{2,n} = E^{\mathcal{F}_n} \|E_n^S \phi(X) - E^S \phi(X)\|^2.
\]
Substituting Equation \eqref{eq:uni_bound} in shows that this expression is equal to
\[
E^{\mathcal{F}_n} \Bigl(\, \sup_{\|h\| \leq 1} |E_n^S h(X) - E^S h(X)|^2\Bigr).
\]
The supremum cannot be taken out of the conditional expectation, however, by writing $E_n^S h(X)$ and  $E^S h(X)$ as simple functions (see Appendix \ref{subsec:mesfun}) we can get around this difficulty and control the error in $\bn \cdot \bn_{2,n}$. We demonstrate this in the following by deriving rates of convergence for two cases: for the case where $\mathbb{S}$ is finite, and for the case where $\mathbb{S}$ is the unit cube in $\mathbb{R}^d$ for some $d \geq 1$ and $S$ has a density that is bounded away from zero.  

To derive these rates we rely, among other things, on the  convergence of the empirical process uniformly over families of functions related to the unit ball of $\cH$ and partitions of $\mathbb{S}$. For instance, in the case where $\mathbb{S}$ is finite we need to assume that 
\[
\mathcal{H}_{\mathbb{S}} :=  \{(h \circ \pi_1) \times \chi(\mathbb{X} \times \{s\})  : h \in \cH, \|h\| \leq 1, s\in \mathbb{S}\},
\]
as a family of real-valued functions  on $\mathbb{X} \times \mathbb{S}$, is a $P$-Donsker class. The function $\pi_1:\mathbb{X} \times \mathbb{S} \rightarrow \mathbb{X}$ is here the projection onto the first argument, i.e. $\pi_1(x,s) =x$. For the definition of \textit{$P$-Donsker classes} see \cite{DUDLEY14,GINE16}.

There are various ways to verify this condition in concrete settings. For example, if $\cH$ is a finite dimensional RKHS then $\cH_\mathbb{S}$ is a $P$-Donsker class under a mild measurbility assumption. This follows from a few simple arguments: any finite dimensional space of functions is a VC-subgraph class \citep[Ex.3.6.11]{GINE16}; this implies directly that  $\{(h \circ \pi_1) \times \chi (\mathbb{X} \times \{s\}) : h \in \cH, \|h \| \leq 1 \}$ is a VC-subgraph class for every $s \in \mathbb{S}$. Furthermore, finite unions of VC-subgraph classes are again a VC-subgraph class; under a mild measurbility assumption it follows now from \citet[Cor.6.19]{DUDLEY14}  that $\cH_\mathbb{S}$ is a $P$-Donsker class.

There are obviously other ways to prove this statement. In particular, one might use that 
the unit ball of $\cH$ is a \textit{universal Donsker class} (see \cite{DUDLEY14,GINE16} for details) when the kernel function is continuous and $\mathbb{X}$ is compact (this also holds when $\cH$ is infinite dimensional):  due to \cite{Marcus1985} the unit ball of a Hilbert space is a universal Donsker class if $\sup_{x\in \mathbb{X}} |h(x)| \leq c \|h\|$ for some constant $c$ that does not depend on $h$.
If the kernel function is bounded $c = \sqrt{k(x,x)}$ witnesses that this property holds.

\paragraph{Case 1: finitely many sensitive features.}
Our first proposition states that the estimator converges with the optimal rate $n^{-1/2}$ when $\mathbb{S}$ is finite and $\cH_\mathbb{S}$ is a $P$-Donsker class.
\begin{prop} \label{Prop:Est_error_cond}
Given a finite space $\mathbb{S}$ and a $P$-Donsker class $\cH_\mathbb{S}$, it holds that 
\[
\bn E_n^S \phi(X) - E^S \phi(X) \bn_{2,n} \in O^*_{P}(n^{-1/2}).
\]
\end{prop}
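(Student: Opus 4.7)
The plan is to reduce the Hilbert-space-valued estimation error to a uniform supremum of a real-valued empirical process indexed by $\cH_\mathbb{S}$ and then invoke the Donsker hypothesis, which automatically delivers the parametric rate $n^{-1/2}$. Two structural features make this reduction clean: the duality $\|v\|=\sup_{\|h\|\le 1}|\langle h,v\rangle|$, already used to obtain Equation~\eqref{eq:uni_bound}, and the finiteness of $\mathbb{S}$, which lets us resolve the conditional expectation into a finite sum of disjoint pieces and then move the supremum through that sum.

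Concretely, since $(X,S)$ is independent of $\mathcal{F}_n$ and the indicators $\{\chi\{S=s\}\}_{s\in\mathbb{S}}$ are pairwise disjoint, for each $\omega$ only one indicator is nonzero, so the identity from the excerpt unfolds as
\begin{align*}
\bn E_n^S\phi(X) - E^S\phi(X)\bn_{2,n}^2 \;=\; \sum_{s\in\mathbb{S}} P(S=s)\,\bigl\|E_n(\phi(X)|S=s) - E(\phi(X)|S=s)\bigr\|^2 .
\end{align*}
For each $s$ I then dualize using Equation~\eqref{eq:uni_bound} and decompose the ratio estimator via
\begin{align*}
\frac{m_n(h,s)}{q_n(s)} - \frac{m(h,s)}{q(s)} \;=\; \frac{m_n(h,s)-m(h,s)}{q_n(s)} \;-\; \frac{m(h,s)\bigl(q_n(s)-q(s)\bigr)}{q_n(s)\,q(s)},
\end{align*}
where $m_n(h,s)=(1/n)\sum_i h(X_i)\chi\{S_i=s\}$ and $q_n(s)=P_n(S=s)$, with $m,q$ their population analogues. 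The Donsker hypothesis on $\cH_\mathbb{S}$ yields $\sup_{\|h\|\le 1,\,s\in\mathbb{S}}|m_n(h,s)-m(h,s)|=O_P^*(n^{-1/2})$, and since $\chi\{S=s\}$ alone sits in (a scalar multiple of) $\cH_\mathbb{S}$ or is handled by the Bernoulli CLT for each of the finitely many atoms, $|q_n(s)-q(s)|=O_P(n^{-1/2})$ uniformly in $s$. Boundedness of the kernel gives $|m(h,s)|\le\sqrt{\kappa}\,q(s)$ for $\|h\|\le 1$ (with $\kappa=\sup_x k(x,x)$), so the second summand inherits the $n^{-1/2}$ rate once one notes that $1/q_n(s)=O_P(1)$ at each $s$ with $q(s)>0$, which follows from $q_n(s)\to q(s)$.

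Summing the $O_P^*(n^{-1})$ contributions over the finitely many $s\in\mathbb{S}$ with $q(s)>0$ (atoms with $q(s)=0$ contribute nothing to the pseudo-norm) yields $\bn E_n^S\phi(X)-E^S\phi(X)\bn_{2,n}^2=O_P^*(n^{-1})$, which is the claimed rate. The main technical nuisance to watch is the measurability bookkeeping attached to suprema over the (possibly non-separable) unit ball of $\cH$; this is the usual reason empirical-process statements are phrased with outer probability, and it is exactly what the $O_P^*$ notation absorbs. A secondary, easily dispatched point is verifying that $\cH_\mathbb{S}$ admits a square-integrable envelope, which follows from $|h(x)\chi(\mathbb{X}\times\{s\})|\le\sqrt{\kappa}$ under the bounded kernel assumption inherited from the $\cL^2(\cH)$ setup.
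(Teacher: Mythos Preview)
Your proposal is correct and follows essentially the same route as the paper: both arguments first use the disjointness of the indicators $\chi\{S=s\}$ (together with independence of $S$ from $\mathcal{F}_n$) to reduce the squared conditional $\mathcal{\altL}^2$-error to the finite sum $\sum_{s}P(S=s)\sup_{\|h\|\le 1}|E_n(h(X)\mid S=s)-E(h(X)\mid S=s)|^2$, and then argue that each summand is $O_P^*(n^{-1})$. The only difference is cosmetic: the paper obtains the per-atom rate by citing \cite{SG}, whereas you unpack it via the ratio decomposition $m_n/q_n-m/q=(m_n-m)/q_n-m(q_n-q)/(q_n q)$ and appeal directly to the Donsker hypothesis and the Bernoulli CLT, which is exactly the content of that reference in this finite-$\mathbb{S}$ setting.
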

\begin{proof}
The proof is given in Appendix~\ref{app:proof_est_error}.
\end{proof}
\paragraph{Case 2: $[0,1]^d$-valued sensitive features.} We extend Proposition~\ref{Prop:Est_error_cond} to the case where $S$ is not confined to taking finitely many values. In order to state the result, we introduce the following notation. Set $\mathbb S:=[0,1]^d$ for some $d \in \mathbb N$ and let $g:\mathbb{S} \rightarrow \cH$ be such that with probability one $E^S \mX = g\circ S$ (which is possible by Lemma~\ref{lem:cond_exp_g}). Consider a discretization of $\mathbb S$ into dyadic cubes $\Delta_1,\Delta_2,\dots,\Delta_{{\ell^d}}$ of side-length $1/{\ell}$ for some $\ell \in \mathbb N$. Define $\mathfrak{C}_{\ell}:=\{\mathbb X \times \Delta: \Delta \in \mathfrak{D}_{\ell}\}$ and let $\cH_{\mathfrak{C}}:=\{h \times \chi {D}: h\in \cH,~\|h\|\leq 1,~D \in \bigcup_{\ell \in \mathbb N} \mathfrak{C}_{\ell}\}$. 
\begin{prop} \label{Prop:Est_error_cond_cont}
Suppose that the push forward measure $\mu:=P S^{-1}$  has density $u$ with respect to the Lebesgue measure $\lambda$ on $\mathbb S$ with the property that $\inf_{s \in \mathbb S} u(s) \geq b$ for some $b >0$. Assume that $g\circ S$ is $L$-lipschitz continuous and that $\cH_{\mathfrak{C}}$ is a $P$-Donsker class.
We have
\[
\bn E_n^S \phi(X) - E^S \phi(X) \bn_{2,n} \in O^*_{P}(n^{-\frac{1}{2(d+1)}}).
\]
\end{prop}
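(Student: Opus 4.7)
The plan is to extend Proposition \ref{Prop:Est_error_cond} by introducing a bias from the dyadic discretisation of $\mathbb{S}$ and balancing it against the estimation error on the resulting finite partition. Specifically, introduce the partition-based conditional expectation
\[
E_{\mathrm{part}}^S\phi(X) := \sum_{\Delta \in \mathfrak{D}_\ell} E(\phi(X)\,|\,S\in\Delta)\,\chi\{S\in\Delta\},
\]
and apply the triangle inequality in $\bn \cdot \bn_{2,n}$ to split
\[
\bn E_n^S\phi(X) - E^S\phi(X)\bn_{2,n} \leq \bn E_n^S\phi(X) - E_{\mathrm{part}}^S\phi(X)\bn_{2,n} + \bn E_{\mathrm{part}}^S\phi(X) - E^S\phi(X)\bn_{2,n}.
\]
The first summand is an estimation error on an $\ell^d$-cell partition which I will control via the Donsker property of $\cH_{\mathfrak{C}}$, while the second is a deterministic approximation error which I will control via the Lipschitz hypothesis on $g \circ S$.

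For the bias term, each cube $\Delta$ has diameter $\sqrt{d}/\ell$, so for any $s \in \Delta$ the $L$-Lipschitz property yields
\[
\|E(\phi(X)\,|\,S\in\Delta) - g(s)\| = \Bigl\|\mu(\Delta)^{-1}\int_\Delta \bigl(g(s')-g(s)\bigr)\,d\mu(s')\Bigr\| \leq L\sqrt{d}/\ell.
\]
Squaring, weighting by $\mu(\Delta) = P(S\in\Delta)$, and summing over cubes gives the deterministic bound $\bn E_{\mathrm{part}}^S\phi(X) - E^S\phi(X)\bn_{2,n} \leq L\sqrt{d}/\ell$.

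For the stochastic term, scalarise using $\|f\| = \sup_{\|h\|\leq 1}\langle h,f\rangle$ and write $E_n(h(X)\,|\,S\in\Delta) = P_n(h\chi\Delta)/P_n(\chi\Delta)$. On the high-probability event that $P_n(\chi\Delta) \geq P(\chi\Delta)/2$ for every $\Delta \in \mathfrak{D}_\ell$, a standard ratio decomposition gives
\[
\|E_n(\phi(X)\,|\,S\in\Delta) - E(\phi(X)\,|\,S\in\Delta)\| \leq \frac{2\alpha_n}{P(S\in\Delta)},
\]
where $\alpha_n := \sup_{f\in\cH_{\mathfrak{C}}}|P_n f - Pf|$ is $O^*_P(n^{-1/2})$ by the $P$-Donsker hypothesis. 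The density lower bound gives $P(S\in\Delta) \geq b/\ell^d$, hence $\sum_\Delta P(S\in\Delta)^{-1} \leq \ell^{2d}/b$; squaring, weighting by $P(S\in\Delta)$, and summing produces $\bn E_n^S\phi(X) - E_{\mathrm{part}}^S\phi(X)\bn_{2,n}^2 \leq 4\alpha_n^2\,\ell^{2d}/b \in O^*_P(\ell^{2d}/n)$.

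Combining the two bounds gives $O^*_P(\ell^d/\sqrt{n} + 1/\ell)$, which is minimised by $\ell \asymp n^{1/(2(d+1))}$ and yields the advertised rate $n^{-1/(2(d+1))}$. The main obstacle is handling the randomness of the denominators $P_n(\chi\Delta)$ uniformly over a number of cubes that itself grows with $n$: the event that every one of the $\ell_n^d$ cells retains roughly its expected share of samples has to be secured by a uniform concentration argument (e.g.\ Bernstein or Talagrand applied to the indicator class $\{\chi\Delta\}$), which implicitly forces $\ell_n^d = o(\sqrt{n})$ but is consistent with the optimising choice $\ell_n = n^{1/(2(d+1))}$. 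A related subtlety is that $\cH_{\mathfrak{C}}$ is indexed by $\bigcup_\ell \mathfrak{C}_\ell$ rather than a fixed level, precisely so that $\alpha_n = O^*_P(n^{-1/2})$ holds uniformly in the data-dependent resolution $\ell = \ell_n$.
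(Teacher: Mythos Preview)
Your proposal is correct and follows essentially the same bias--variance decomposition as the paper: introduce the partition-level conditional expectation $E^{\mathcal{G}}\phi(X)$ (your $E_{\mathrm{part}}^S\phi(X)$), bound the approximation term by $L\sqrt{d}/\ell$ via Lipschitz continuity, bound the estimation term by $O^*_P(\ell^{d}/\sqrt{n})$ via the Donsker assumption and the density lower bound, and balance at $\ell\asymp n^{1/(2(d+1))}$. The only cosmetic differences are that the paper obtains the bias bound through a closed-convex-hull argument (Diestel--Uhl) rather than your direct averaging inequality, and it delegates the uniform denominator control you flag as the ``main obstacle'' to the cited plug-in result rather than spelling it out.
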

\begin{proof}
The proof is given in Appendix~\ref{app:proof_est_error_cont}.
\end{proof}
\subsection{Generating an oblivious random variable} \label{sec:gen_features}
Given a data-point $(X,S)$ composed of non-sensitive and sensitive features $X$ and $S$ respectively, we can generate an {\em oblivious} random variable $\mZ$ as
\begin{equation}\label{eq:Zempcalc}
\mZ:= \phi(X) - E_n^{S} \phi(X) + E_n(\phi (X)).
\end{equation}
Most kernel methods work with the kernel matrix and do not need access to the features themselves. The same holds in our setting. More specifically, we never need to represent $\mZ$ explicitly in the Hilbert space but only require inner-product calculations. 
In order to calculate the empirical estimates of the conditional expectation $E^S_n \mX$ and of $E_n( \mX)$ in \eqref{eq:Zempcalc}
we consider a simple approach whereby we split the training set into two subsets of size $n$, and use half the observations to obtain the empirical estimates of the expectations. 
The remaining $n$ observations are used to obtain an {\em oblivious} predictor; we have two cases as follows. 
\paragraph{Case 1 (M-Oblivious).}  The standard kernel matrix $K$ is calculated with the remaining $n$ observations and a kernel-method is applied to $K$ to obtain a predictor $g$.
When applying the predictor to a new unseen data-point $(X,S)$ we first transform $X$ into $\mZ$ via \eqref{eq:Zempcalc} and calculate the prediction as $\langle g, \mZ \rangle$. As discussed in the Introduction, we conjecture that this approach is suitable in the case
where the labels $Y$ are conditionally independent of the sensitive features $S$ given the non-sensitive features $X$, i.e. when  $S,X,Y$  form a Markov chain $S\rightarrow X \rightarrow Y$. As such we call this approach $M$-Oblivious. 

\paragraph{Case 2 (Oblivious).} Instead of calculating the kernel matrix $K$ an \textit{oblivious kernel matrix}, i.e.
\begin{align}\label{eq:oo}
\cO = \begin{pmatrix}
\| \mZ_1\|^2  & \cdots & \langle \mZ_1, \mZ_n \rangle \\
\vdots  & \ddots & \vdots\\
 \langle \mZ_n, \mZ_1\rangle & \cdots & \|\mZ_n\|^2
\end{pmatrix},
\end{align}
is calculated by applying Equation  \eqref{eq:Zempcalc} to the remaining training samples $(X_i,S_i)$ before taking inner products.
The oblivious matrix is then passed to the kernel-method to gain a predictor $g$.
The matrix is positive semi-definite since
$
\mb a^\top \cO \mb a = \| \sum_{i=1}^n a_i \mZ_i\|^2 \geq 0, 
$
for any $\mb a \in \mathbb{R}^n$.
The complexity to compute the matrix is $O(n^2)$ (see  Appendix \ref{suppl:alg} for details on the algorithm). Prediction for a new unseen data-point $(X,S)$ is now done in the same way as in Case 1.

\subsection{Oblivious ridge regression} \label{sec:obl_reg}
In this section we showcase our approach in the context of kernel ridge regression. We have three relevant random variables, namely, the non-sensitive features $X$, the sensitive features $S$ and labels $Y$ which are real valued. We assume that we have $2n$ i.i.d. observations $\{(X_i,S_i,Y_i)\}_{i\leq 2n}$. We use the observations $n+1, \ldots, 2n$ to generate the oblivious random variables $\mb Z_i$ and then use  oblivious data $\{(\mb Z_i,Y_i)\}_{i\leq n}$ for oblivious ridge regression (ORR).

The ORR problem has the following form. Given a  positive definite kernel function $k: \mathbb{X} \times \mathbb{X}\rightarrow \mathbb{R}$, a corresponding RKHS $\cH$ and oblivious features $\mb Z_i$. Our aim is to find a regression function $h \in \cH$ such that the mean squared error between $\langle h, \mZ \rangle $ and $Y$ 
is small. Replacing the mean squared error by the empirical least-squares error and adding a regularization term for $h$ gives us the optimization problem
\begin{align}\label{eq:lse}
\hat h = \argmin_{h\in \cH} \sum_{i=1}^n (\langle h, \mb Z_i \rangle - Y_i)^2 + \lambda \|h\|^2,
\end{align}
where $\lambda>0$ is the regularization parameter.

It is easy to see that the setting is not substantially different from standard kernel ridge regression and derive a closed form solution for $\hat h$. More specifically, we have a representer theorem in this setting which tells us that the minimizer lies in the span of $\mb Z_1,\ldots, \mb Z_n$.  One can then solve the optimization problem in the same way as for standard kernel ridge regression, see Appendix \ref{app:ridge_reg_opt} for details. The solution to the optimization problem is
$\hat h = \sum_{i=1}^n \alpha_i \mZ_i$, where $\bm \alpha = (\mathcal{O} + \lambda I)^{-1} \bm y$.
The vector $\bm y$ is given by $(Y_1, \ldots, Y_n)^\top$.  
Predicting $Y$ for a new observation $(X,S)$ is  achieved by first generating the oblivious features $\mb Z$  (see Appendix \ref{suppl:alg_obl_pred}) and then by evaluating
$\langle \mb Z, \hat h \rangle = \sum_{i=1}^n \alpha_i \langle \mb Z,\mb Z_i \rangle.$

\subsection{Comparison to \citep*{DON18}} \label{sec:disc_don}
Our focus in this paper is on generating features that are less dependent on the sensitive features than the original non-sensitive features. However, the conditional expectation $E^S \phi(X)$, which is at the heart of our approach, also features prominently in methods that add constraints to SVM classifiers. In particular, in \cite{DON18} a constraint is used to achieve approximately equal opportunity in classification where the sensitive feature is binary. While their approach does not make explicit use of conditional expectations one can recognize that the key object in their approach (Eq. (13) in \cite{DON18}) is, in fact, closely related to our conditional expectation when used in the case where $S$ can attain only two values (say $\mathbb{S} = \{0,1\}$). In detail, the optimization problem (14) is constraint by enforcing for a given $\epsilon >0$ that the solution $h^* \in \cH$ fulfills
\begin{equation} \label{eq:Ferm_constraint}
|E_n(h^*(X) | S= 0) - E_n(h^*(X)| S= 1)| \leq \epsilon. 
\end{equation}
Considering $\mZ = \phi(X) - E^S \phi(X) + E(\phi(X))$ 
we can observe right away that in this setting for all $h \in \cH$,
\[
E( \langle h,\mZ \rangle |S = 0)  = E( \langle h,\mZ \rangle |S = 1).
\]
To see this observe that $E^S \langle h, \mZ\rangle$ is almost surely equal to the $E(h(X))$. In other words
\[
E(\langle h,\mZ \rangle |S = 0 ) \times \chi\{S=0\} + E(\langle h,\mZ \rangle |S = 1) \times \chi\{S=1\} = 
E^S \langle h,\mZ \rangle  
\]
is almost surely constant. Unless $P(S=0)=0$ or $P(S=1)=0$ this implies that  $E(\langle h,\mZ \rangle |S = 0) = E(\langle h,\mZ \rangle |S = 1)$. Hence, for the max-margin classifier $h^*$
and $\mZ$ it holds that $E(\langle h^*,\mZ \rangle |S = 0) = E(\langle h^*,\mZ \rangle |S = 1)$ and on the population level our new features $\mZ$ guarantee that constraint \eqref{eq:Ferm_constraint} is automatically fulfilled.

\section{Empirical evaluation} \label{sec:experiments}
\begin{figure}
\centering
\begin{subfigure}{.5\textwidth}
  \centering
 \includegraphics[scale=0.5]{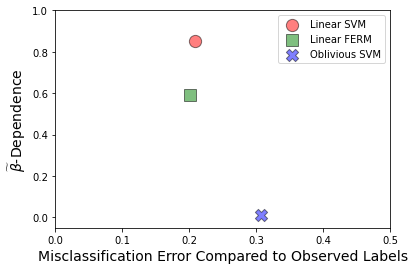}
  \caption{}\label{fig:sub1}
\end{subfigure}%
\begin{subfigure}{.5\textwidth}
  \centering
\includegraphics[scale=0.5]{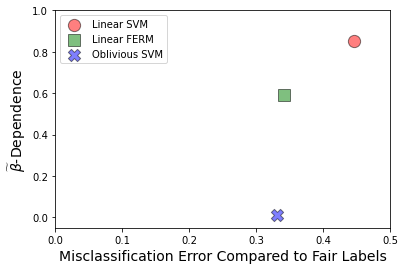}
  \caption{}\label{fig:sub2}
\end{subfigure}
\caption{Binary classification error vs. $\widetilde{\beta}$-dependence between prediction and sensitive features is shown for three different methods: classical Linear SVM, Linear FERM, and Oblivious SVM. In Figure~\ref{fig:sub1} the error is calculated with respect to the  observed labels which are intrinsically biased and in Figure~\ref{fig:sub1} the error is calculated with respect to the true {\em fair} classification rule.}
\label{fig:MC}
\end{figure}

In this section we report our experimental results for classification and regression. Our objective in the classification experiment is to point out an important property of supervised learning problems where sensitive features affect both the non-sensitive features and the labels: the estimation error of the observed labels  
can be misleading as a quality measure. The aim is much rather to predict values in an unbiased fashion. The first experiment  highlights this difference by considering a synthetic data set for which we know the unbiased labels  (though the true unbiased labels arre not available to the methods). We measure the dependencies
between the predicted values and the sensitive features, and compare against a standard SVM and to FERM. 
The second set of experiments aims to investigate how dependencies between sensitive and non-sensitive features affect ORR and M-ORR. We are investigating this relationship by considering a family of synthetic problems for which we can adjust the dependency between the features using a parameter $\gamma$. In this set of experiments we are also concerned with clarifying the relationship between ORR and M-ORR, where the latter is the M-Oblivious version of KRR, see Section~\ref{sec:gen_features}. Our implementation can be found at the following repository: \url{https://github.com/azalk/Oblivious.git}.

\subsection{Binary Classification}
We carried out an experiment to mimic a scenario where a class of students should normally receive grades between $0$ and $5$, and anyone with a grade above a fixed threshold $\theta=2$ should pass.  Half of the class, representing a ``minority group'', are disadvantaged  in that their grades are almost systematically reduced, while the other half receive a boost on average. More specifically, 
let the sensitive feature $S$ be a $\{0,1\}$-valued Bernoulli random variable with parameter $0.5$, and let $X_0$ be distributed according to a truncated normal distribution with support $[1,4]$. 
Let the non-sensitive feature $X$, representing a student's grade, be given by
\begin{equation*}
X:=(X_0-B)\chi\{S=0\}+(X_0+B)\chi\{S=1\}
\end{equation*} 
where $B$ is a Bernoulli random variable with parameter $0.9$ independent of $X_0$ and of $S$. 
The label $Y$ is defined as a noisy decision influenced by the student's ``original grade'' $X_0$ prior to the $S$-based modification. 
More formally, let $U$ be a random variable independent of $X_0$ and of $S$, and uniformly distributed on $[0,1]$. 
Let $Y_0 :=\chi\{ U \geq X_0\}$ and define
\begin{equation*}
Y:=Y_0\chi\{X+ S \geq \theta\}.
\end{equation*}
\paragraph{Classification Error.} 
In a typical classification problem,  the labels $Y$ depend on both $X$ and $S$ so when we remove the bias it is not clear what we should compare against when calculating the classification performance. 
Observe that our experimental construction here allows access to the true ground-truth labels
\begin{equation}\label{eq:Y*}
Y^*:=\chi\{X_0 \geq \theta\}.
\end{equation}
Therefore, we are able to calculate the true (unbiased) errors as well. However, this is not always the case in practice. In fact, we argue that the question of how to evaluate fair classification performance is an important open problem which has yet to be addressed.

\paragraph{Measure of Dependence.}
Let $\cF_n:=\sigma(X_1,\dots,X_n,S_1,\dots,S_n),~n \in \mathbb N$ be the $\sigma$-algebra generated by the training samples. 
In this experiment, we measure the dependence between the predicted labels $\widehat{Y}$ produced by any algorithm and the sensitive features $S$ as 
\begin{equation}\label{eq:beta_dep} 
\widetilde{\beta}(\widehat{Y},S) := \frac{1}{2}\sum_{s \in \{0,1\}}\sum_{y \in \{0,1\}}\E \left |P(\widehat{Y}=y,S=s|\cF_n)-P(\widehat{Y}=y|\cF_n)P(S=s)\right|
\end{equation}
which is closely related to the 
$\beta$-dependence (see, e.g. \citep[vol. I, p. 67]{BRAD07}) between their respective $\sigma$-algebras. 
We obtain an empirical estimate of $\widetilde{\beta}(\sigma(\widehat{Y}),\sigma(S))$ by simply replacing the  probabilities in  \eqref{eq:beta_dep} with corresponding empirical frequencies. 
\paragraph{Experimental results.}
We generated $n=1000$ training and test samples as described above and the errors reported for each experiment are averaged over $10$ repetitions. 
Figure~\ref{fig:MC} shows binary classification error vs. dependence between prediction and sensitive features for three different methods: classical Linear SVM, Linear FERM, and Oblivious SVM. In Figure~\ref{fig:sub1} the error is calculated with respect to the  observed labels which are intrinsically biased and in Figure~\ref{fig:sub1} the error is calculated with respect to the true {\em fair} classification rule $Y^*$ given by \eqref{eq:Y*}. As can be seen in the plots, the {\em true} classification error of Oblivious SVM is smaller than that of the other two methods. Moreover, in both plots the $\beta$-dependence between the predicted labels produced by Oblivious SVM and the sensitive feature is close to $0$ and is much smaller than that of the other two methods.

\begin{figure}
\centering
\begin{subfigure}{.5\textwidth}
  \centering
\includegraphics[scale=0.49]{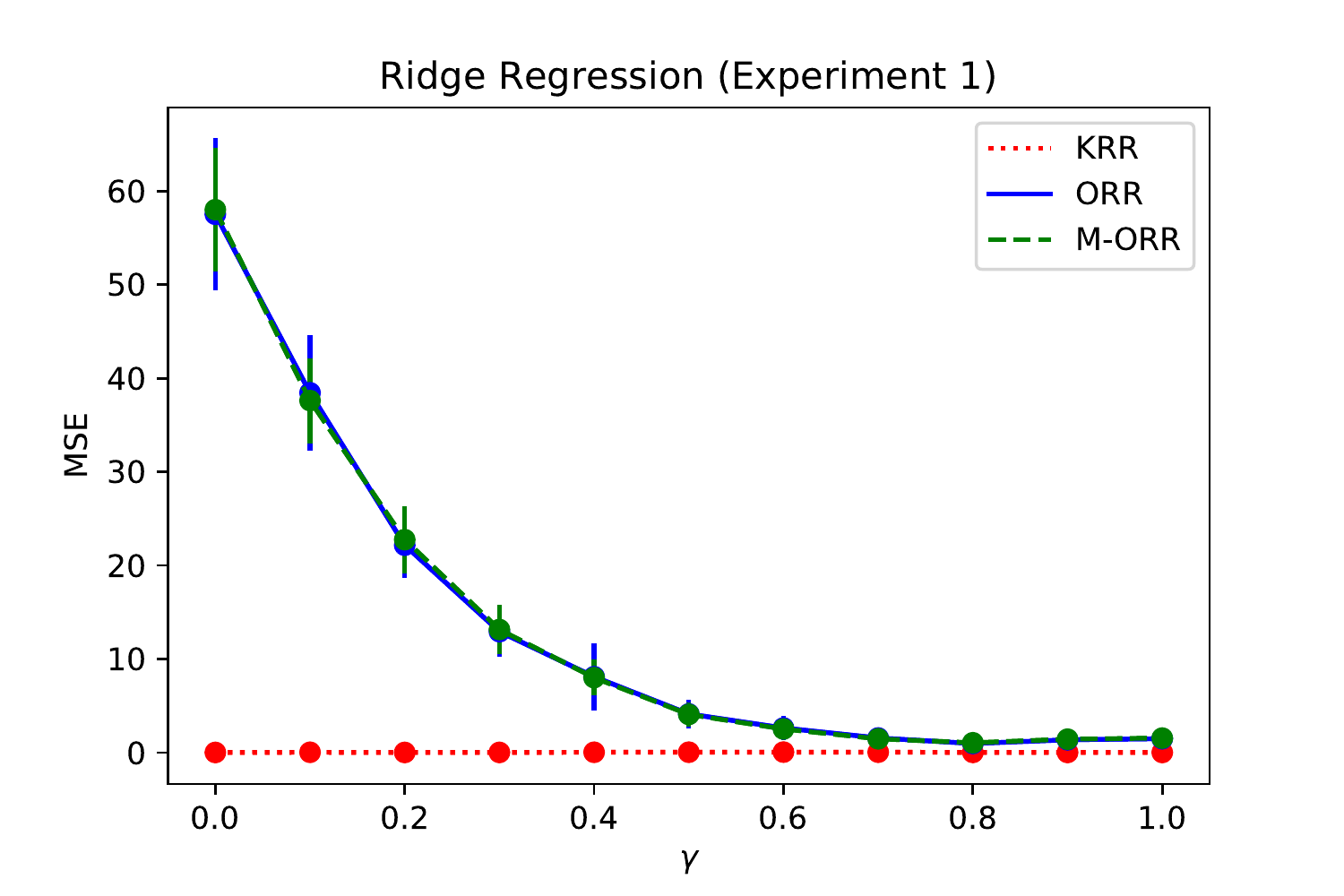}
  \caption{}\label{fig:sub1r}
\end{subfigure}%
\begin{subfigure}{.5\textwidth}
  \centering
\includegraphics[scale=0.49]{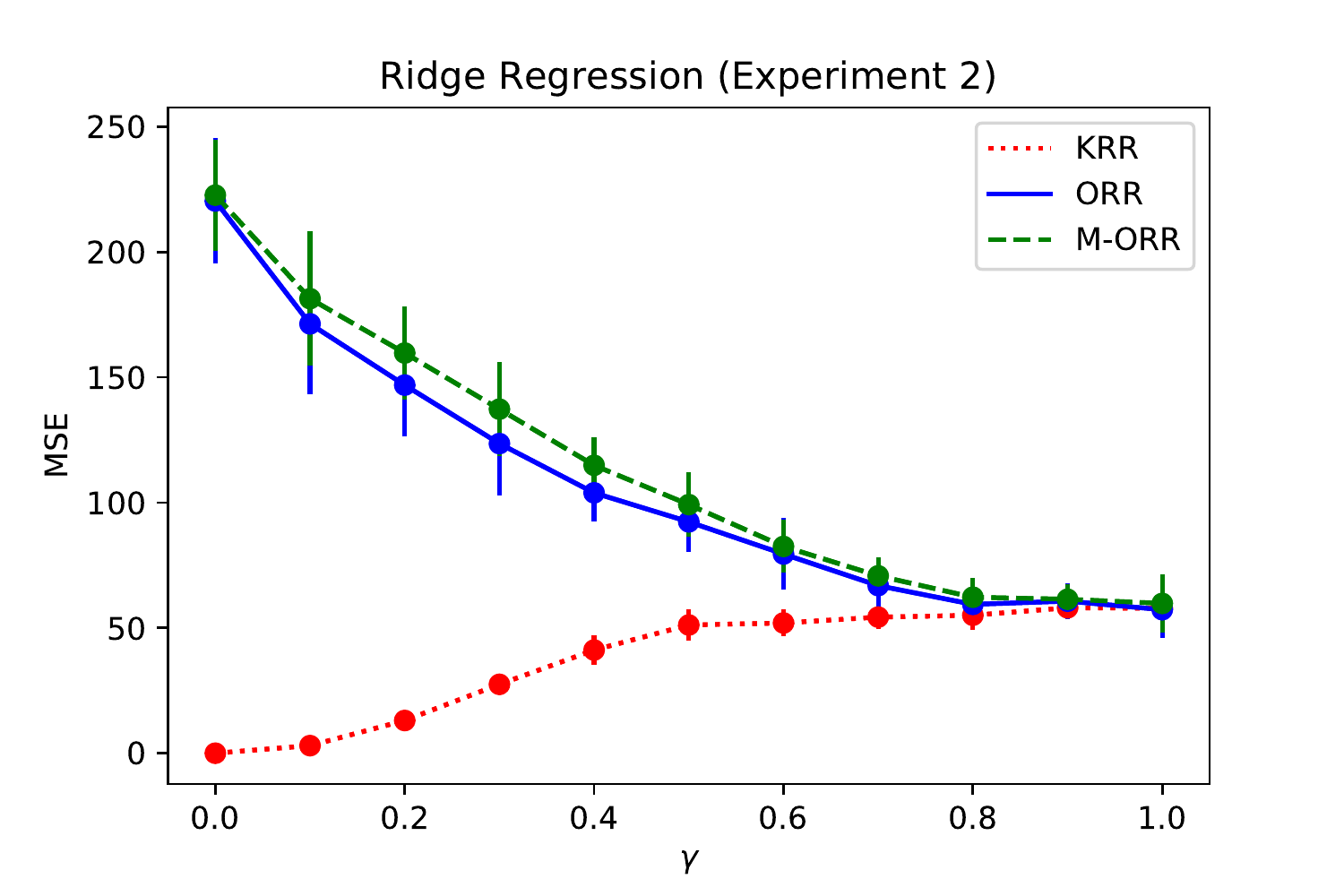}
  \caption{}\label{fig:sub2r}
\end{subfigure}
\caption{Plots \ref{fig:sub1r} and \ref{fig:sub2r} correspond to  Ridge Regression Experiments 1 and 2  respectively. In both plots, the performance of three estimators (KRR,ORR and M-ORR) is plotted against $\gamma$, where $\gamma$ controls the dependence of $X$ on $S$. The case of $\gamma = 0$ corresponds to the highest dependence while $\gamma=1$ corresponds to the case in which $X$ is independent of $S$.}
\label{fig:krr}
\end{figure}

\subsection{Ridge-Regression}
In this section we compare ORR with KRR and the `Markov' version of ORR, M-ORR, which applies the KRR solution to oblivious test features $Z$. We use an RBF kernel with $\sigma=1$. We are  particularly interested in how the dependence of $S$ on $X$ affects the performance and in a comparison of  ORR to M-ORR. We use synthetic data to be able to control the dependence between $S$ and $X$. The basic data generating process is as follows. Sensitive  features $S$ and non-sensitive features $U$ are sampled independently from a  uniform distribution with support $[-5,5]$.  The features $X$ are a convex combination of these two of the form $X = \gamma U + (1-\gamma) S$, $\gamma \in [0,1]$. We consider two ways to generate the response variable $Y$. In \textit{Experiment 1}, 
the response variable is $Y = X^2 + \epsilon$, where $\epsilon$ is normally distributed with variance $0.1$ and is independent of $U$ and $S$. In this case $S\rightarrow X \rightarrow Y$ forms a Markov chain and we expect 
M-ORR to do well. In \textit{Experiment 2}, the variable $S$ influences $Y$ also directly and not only through $X$, i.e. 
$Y = X^2 + S^2 +  \epsilon$. We use here $S^2$ instead of $S$ because $S^2$ is not a zero mean random variable and cannot simply be consumed into the noise term. 

Figure~\ref{fig:krr} shows the results of these experiments. In these experiments, $\gamma$ varies between $0, 0.1, 0.2 \ldots, 1$. For each value of $\gamma$ we generate $500$ data points for ORR and M-ORR to infer the conditional expectations and further $500$ data points are used by all three methods to calculate the ridge regression solution.  For simplicity, we fixed a partition for the conditional expectation: the set $\mathbb{S} = [-5,5]$ is split into a dyadic partition consisting of 16 sets.  
Each method uses a validation set of $100$ data points (which are different from the $500$ training data points) to select the regularization parameter $\lambda$ from $2^{-5}, 2^{-4},\ldots, 2^5$. 
A test set of size $100$ is used to calculate the mean squared error (MSE).
For each $\gamma$ the experiment is repeated $20$ times.  Figure~\ref{fig:krr} reports the average MSE and the standard deviation of the MSE  over these $20$ experiments.

We make the following observations from Figure~\ref{fig:sub1r}.
KRR is the best estimator as it uses the features $X$ directly and not the new features $\mZ$. As $\gamma \rightarrow 1$ both the ORR and M-ORR estimators approach the KRR estimator since the effect of $S$ on $X$ vanishes. Both estimators do not quite reach the performance of the KRR estimator. This is due to the additional uncertainty introduced by estimating the conditional expectations. By definition, the ORR estimator will achieve the best fit of the training data given the new features $\mZ$. We can observe that the M-ORR estimator is performing as well as the ORR estimator even though the M-ORR estimator uses the KRR solution and applies it to $\mZ$. This is due to the fact that $S\rightarrow X\rightarrow Y$ forms a Markov chain. Finally, when $\gamma = 0$ both the M-ORR and ORR estimator achieve an MSE that is very close to the best MSE that can be achieved by a regressor that generates values which are independent of $S$: 
assume that some new features $Z'$ are given which are a function of $X$ and are independent of $S$. 
 when $\gamma=0$   this random variable $Z'$ can  only be independent of $S$ if $Z'$ is a constant. However, if $Z'$ is a constant then the ridge-regressor using $Z'$ is also a constant and the MSE $E(Y - c)^2 = E(S^4) - 2c E(S^2) +c^2 + 0.1  
$ is minimized for $c= E(S^2)$. The minimal value is approximately $56.2$ which is very close to the values of the ORR and M-ORR estimator.      

Figure~\ref{fig:sub2r} shares a few characteristics with Figure~\ref{fig:sub1r} as follows. For $\gamma =0 $ both M-ORR and ORR attain an MSE that is close to the best possible (in the above sense) which is approximately equal to $224.8$. As before, KRR is the overall best estimator and ORR is the best estimator using features $\mZ$. Furthermore, as $\gamma \rightarrow 1$ both estimators become close to the KRR solution. A crucial difference in this experiment is that $S, X, Y$ does not form a Markov chain anymore and the performance of M-ORR
is worse than that of ORR for values of $\gamma$ between $0.2$ and $0.8$. The performance of M-ORR and ORR is essentially the same for $\gamma=0$ and $\gamma =1$. This is not surprising given that when $\gamma = 0$ then $Y = 2 S^2 + \epsilon$ and we are back in the Markov chain setting, while when $\gamma=1$ then $X$ is already independent of $S$.

\section{Discussion}\label{sec:conc}
We have introduced a novel approach to derive oblivious features which approximate non-sensitive features well while maintaining only minimal dependence on sensitive features. 
We make use of Hilbert-space-valued conditional expectations and estimates thereof; our plug-in estimators in this case can be of independent interest in future research, and in turn open grounds for interesting questions involving their guarantees.
The application of our approach to kernel methods  is facilitated by an oblivious kernel matrix which we have derived to be used in place of the original kernel matrix. We characterize the dependencies between the oblivious and the sensitive features in terms of how `close' the sensitive features are to the low-dimensional manifold $\phi[\mathbb{X}]$. One may wonder if this relation can be exploited to further reduce dependencies, and potentially achieve complete independence.
Another important question concerns the interplay between the estimation errors introduced by estimating conditional expectations and the estimation errors introduced by kernel methods which are applied to the oblivious data.

\appendix

\section{Probability in Hilbert spaces: elementary results}
\label{app:prob_in_H}
In this section we summarize a few elementary results concerning random variables that attain values in a separable Hilbert space which we use in the paper. 

\subsection{Measurable functions}\label{subsec:mesfun}
There are three natural definitions of what it means for a function $f:\Omega \rightarrow \cH$
to be measurable. Denote the measure space in the following by $(\Omega,\mathcal{A})$ with the understanding that these definitions apply, in particular, to $\Omega = \mathbb{R}^d$ and $\mathcal{A}$ being the corresponding Borel $\sigma$-algebra.
\begin{enumerate} 
\item \label{prop:bmf} $f$ is \textit{Bochner-measurable} iff $f$ is the point-wise limit of a sequence of simple functions, where $\mS:\Omega \rightarrow \cH$ is a simple function if it can be written as
\[
\mS(\omega) = \sum_{i=1}^n h_i \times \chi A_i(\omega) 
\]
for some $n\in\mathbb{N}$, $A_1,\ldots, A_n \in \mathcal{A}$ and $h_1,\ldots, h_n \in \cH$.

\item  $f$ is \textit{strongly-measurable} iff $f^{-1}[B] \in \mathcal{A}$ for every Borel-measurable subset $B$ of $\cH$. The topology that is used here is the norm-topology.

\item \label{prop:weakm} $f$ is \textit{weakly-measurable} iff for every element $h\in \cH$ the function $\langle h, f \rangle :\Omega \rightarrow \mathbb{R}$ is measurable in the usual sense (using the Borel-algebra on $\mathbb{R}$).
\end{enumerate}

All three definitions of measurability are equivalent in our setting. We call a function $f:\Omega \rightarrow \cH$ a \textit{random variable} if it is measurable in this sense.

The main example in our paper is $f = \phi(X)$. This is a well defined random variable whenever $X:\Omega \rightarrow \mathbb{R}^d$ and  $\phi:\mathbb{R}^d \rightarrow \cH$ are both Borel-measurable. 

\section{Hilbert space-valued conditional expectations}
\label{sec:HS_space_valued}
\subsection{Basic properties}
We recall a few important properties of Hilbert space valued conditional expectations. These often follow from properties of real-valued conditional expectations through `scalarization' \citep{PIS16}. In the following, let $\mX, \mZ \in \mathcal{\altL}^2(\Omega,\mathcal{A},P;\cH)$ and $\mathcal{B}$ some $\sigma$-subalgebra of $\mathcal{A}$.  Due to \citet{PIS16}[Eq. (1.7)], for any $f\in \cH$  
\begin{equation} \label{eq:scalarization}
\langle f, E^\mathcal{B} \mX \rangle = E^\mathcal{B} \langle f, \mX \rangle \text{\quad \quad (a.s.)}
\end{equation}
and the right hand side is just the usual real-valued conditional expectation. It is also worth highlighting that the same holds for the Bochner-integral $E(\mX)$, i.e. for any $f\in \cH$, $\langle f, E(\mX) \rangle = E \langle f,\mX \rangle$. This can be used to derive properties of $E^\mathcal{B} \mX$. For instance, since $E( E^\mathcal{B} \langle f,\mX \rangle ) = E \langle f,\mX \rangle$ is a property of real-valued conditional expectations we find right away that
\[
\langle f, E(\mX) \rangle = E \langle f,\mX \rangle = E(E^\mathcal{B} \langle f,\mX \rangle) = E \langle f, E^\mathcal{B} \mX \rangle = \langle f, E(E^\mathcal{B} \mX) \rangle. 
\]
Because $E(\mX)$ and $E(E^\mathcal{B} \mX)$ are elements of $\cH$ and for all $f\in \cH$ 
\[
\langle f, E(\mX) - E(E^\mathcal{B} \mX) \rangle =0
\]
it follows that $E(\mX) = E(E^\mathcal{B} \mX)$. 

Another result we need is that if $\mZ$ is $\mathcal{B}$-measurable then 
\[
E^\mathcal{B} \langle \mX, \mZ \rangle = \langle E^\mathcal{B} \mX, \mZ  \rangle \text{\quad \quad (a.s.)}.
\]
Showing this needs a bit more work. Since $\mZ \in \mathcal{L}^2(\Omega,\mathcal{B},P;\cH)$ there exist  $\mathcal{B}$-measurable simple functions $U_n$ such that $U_n$ converges point-wise to $\mZ$, $\lim_{n\rightarrow \infty} \|U_n^\bullet - \mZ^\bullet\|_2 = 0$ and the sequence fulfills $\|U_n\| \leq 3 \|\mZ\|$ for all $n\in\mathbb{N}$ \citep{PIS16}[Prop.1.2]. Consider some $n$ and write \[U_n = \sum_{i=1}^m h_i \times \chi A_i,\] for a suitable $m\in \mathbb{N}, h_i \in \cH, A_i \in \mathcal{B}$, then 
\begin{align*}
E^\mathcal{B} \langle \mX, U_n \rangle &= \sum_{i=1}^m E^\mathcal{B} (\langle \mX, h_i \rangle \times \chi A_i) \\ 
&=\sum_{i=1}^m (E^\mathcal{B} \langle \mX, h_i \rangle) \times \chi A_i \\ &= \sum_{i=1}^m  \langle  E^\mathcal{B} \mX, h_i \rangle \times \chi A_i  \\ &= \langle  E^\mathcal{B} \mX, U_n \rangle \text{\quad \quad (a.s.)},
\end{align*}
because $\chi A_i$ is $\mathcal{B}$-measurable. For the right hand side point-wise convergence of $U_n$ to $\mZ$ tells us that 
for all $\omega\in \Omega$  we have $\lim_{n\rightarrow \infty} \|U_n(\omega) - \mZ(\omega)\| =0$. Because $E^\mathcal{B} \mX^\bullet \in L^2(\Omega,\mathcal{A},P;\cH)$ we also know that $E^\mathcal{B} \mX$ is finite almost surely. Therefore, for $\omega$ in the corresponding co-negligible set,
\[
\lim_{n\rightarrow \infty} |\langle  (E^\mathcal{B} \mX)(\omega), U_n(\omega) \rangle - \langle  (E^\mathcal{B} \mX)(\omega), \mZ(\omega) \rangle| \leq \lim_{n\rightarrow \infty} \|(E^\mathcal{B} \mX)(\omega)\| \| U_n(\omega) - \mZ(\omega)\| =0
\]
and  $\lim_{n\rightarrow \infty} \langle  E^\mathcal{B} \mX, U_n \rangle = \langle  E^\mathcal{B} \mX, \mZ \rangle$ almost surely. 
 
By the same argument it follows that 
$\lim_{n\rightarrow \infty} \langle  \mX, U_n \rangle = \langle  \mX, \mZ \rangle$ almost surely. Let $h_n = \langle \mX, U_n \rangle$ and  $h = \langle  \mX, \mZ \rangle$ 
then $|h_n - h| \leq \|\mX\| \|U_n - \mZ\| \leq 3 \|\mX\| \|\mZ\|$. Furthermore, $|h_n| \leq |h| + 3 \|\mX\| \|\mZ\| \leq 4 \|\mX\| \|\mZ\| \leq 4 (\|\mX\|^2 + \|\mZ\|^2)$. The right hand side lies in $\mathcal{\altL}^1$ and dominates $h_n$.
Using \citet{SHI89}[II.\S 7.Thm.2(a)], we conclude that 
\[
\lim_{n\rightarrow \infty} E^\mathcal{B} \langle \mX, U_n \rangle = E^\mathcal{B} \langle \mX, \mZ \rangle \text{\quad (a.s.)}
\]
and the result follows. 
 
The operator $E^\mathcal{B}$ is also idempotent and self-adjoint, i.e.
\[
E^\mathcal{B} \mX = E^\mathcal{B}( E^\mathcal{B} \mX) \text{\enspace (a.s) \quad and \quad} \bl \mX^\bullet, E^\mathcal{B} \mZ^\bullet  \br
= \bl E^\mathcal{B} \mX^\bullet,  \mZ^\bullet  \br.
\]

\subsection{Representation of conditional expectations}
A well known result in probability theory states that a conditional expectation $E^S X$ of a real-valued random variable $X$ given another real-valued random variable $S$ can be written as $g(S)$ with some suitable measurable function $g:\mathbb{R}\rightarrow \mathbb{R}$. This result generalizes to our setting. Here, we include the generalized result together with a short proof for reference.

\begin{lem} \label{prelem:cond_exp_g}
Consider a probability space $(\Omega,\mathcal{A},P)$, and let $\cH$ be a separable Hilbert space. Let $S: \Omega \rightarrow \mathbb{R}^d$ be a random variable and suppose that $\eta: \Omega \rightarrow \cH$ is a $\sigma(S)$-measurable function. There exists a Bochner-measurable function $g:\mathbb{R}^{d} \rightarrow \cH$ such that 
\[
\eta = g \circ S \text{\quad almost surely.}
\]
\end{lem}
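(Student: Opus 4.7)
The plan is to reduce the Hilbert-space-valued case to the classical real-valued Doob–Dynkin lemma by means of simple function approximation, and then use the fact that preimages under $S$ of Borel sets generate $\sigma(S)$.

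First I would exploit the separability of $\cH$ and the Bochner-measurability of $\eta$ to write $\eta$ as the almost-sure pointwise limit of a sequence of $\sigma(S)$-measurable simple functions
\[
\eta_k = \sum_{i=1}^{m_k} h_{k,i} \, \chi A_{k,i},
\qquad A_{k,i} \in \sigma(S),\ h_{k,i}\in\cH.
\]
Such an approximation exists by the standard construction used to characterize Bochner measurability (pick a countable dense set $\{h_n\}$ in $\cH$ and, on level $k$, let $\eta_k(\omega)$ be the nearest $h_n$ with $n\le k$); each resulting preimage is in $\sigma(S)$ because $\eta$ is $\sigma(S)$-measurable.

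Next, since $A_{k,i}\in\sigma(S)$, the classical real-valued Doob–Dynkin lemma supplies Borel sets $B_{k,i}\subset\mathbb{R}^d$ with $A_{k,i} = S^{-1}(B_{k,i})$. Define Borel-measurable simple functions
\[
g_k(s) \;:=\; \sum_{i=1}^{m_k} h_{k,i}\,\chi B_{k,i}(s),
\qquad s\in\mathbb{R}^d,
\]
so that by construction $g_k\circ S = \eta_k$ pointwise on $\Omega$. The remaining task is to pass to a pointwise limit of the $g_k$ on $\mathbb{R}^d$ while preserving measurability.

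To do this, introduce the Cauchy set
\[
C \;:=\; \bigcap_{m\in\mathbb{N}}\bigcup_{N\in\mathbb{N}}\bigcap_{k,k'\ge N}\bigl\{s\in\mathbb{R}^d : \|g_k(s)-g_{k'}(s)\|<1/m\bigr\}.
\]
Each inner set is Borel because $g_k,g_{k'}$ are simple, so $C$ is Borel. Set
\[
g(s) \;:=\; \lim_{k\to\infty} g_k(s)\ \text{if } s\in C,\qquad g(s):=0\ \text{otherwise.}
\]
Then $g$ is the pointwise limit of the simple functions $g_k\cdot\chi C$ (extended by zero), hence Bochner-measurable. Finally, on the co-negligible set where $\eta_k\to\eta$ we have $g_k(S(\omega))=\eta_k(\omega)\to\eta(\omega)$, so $S(\omega)\in C$ and
\[
g(S(\omega)) = \lim_{k\to\infty} g_k(S(\omega)) = \eta(\omega),
\]
which gives $\eta = g\circ S$ almost surely.

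The main obstacle I anticipate is the purely measure-theoretic issue that the $g_k$ need not converge outside the image $S[\Omega]$, so $g$ must be defined globally on $\mathbb{R}^d$ without breaking measurability; the Cauchy-set construction above handles this cleanly. A minor secondary point is ensuring that the initial simple approximation of $\eta$ can indeed be chosen $\sigma(S)$-measurable, which follows from the nearest-neighbour construction on a countable dense subset of $\cH$ together with the assumed $\sigma(S)$-measurability of $\eta$.
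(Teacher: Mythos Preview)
Your proof is correct and follows essentially the same strategy as the paper's: approximate $\eta$ by $\sigma(S)$-measurable simple functions, lift each to a simple function on $\mathbb{R}^d$ by writing every $A_{k,i}\in\sigma(S)$ as $S^{-1}(B_{k,i})$, and pass to the pointwise limit. Your Cauchy-set device for defining $g$ globally is in fact more careful than the paper's version, which sets $g=\lim_n g_n$ on the raw image $S[\Omega]$ and $0$ elsewhere without checking that $S[\Omega]$ is Borel; your construction sidesteps that measurability issue cleanly and is the sharper argument.
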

\begin{proof}
We first show the statement for simple functions, and observing that any arbitrary Bochner-measurable function can be written as the point-wise limit of a sequence of simple functions, we extend the result to arbitrary $\eta$. 

First, assume that $\eta:= h \chi A$ for some $h \in \cH$ and $A \in \sigma(S)$. 
Since $S$ is measurable with respect to $\B({\mathbb R}^d)$ there exists some $B \in \B({\mathbb R}^d)$ such that $\{\omega: S(\omega) \in B\} = A$.  Define $g: {\mathbb R}^d \rightarrow \cH$ as $g:=h \tilde \chi B$, where $\tilde \chi$ denotes the indicator function on $\mathbb{R}^d$.
We obtain, $\eta(\omega)=h \chi A(\omega)= h \tilde \chi B(S(\omega))$ so that $\eta= g \circ S$. 
Next, let
$\eta:=\sum_{i=1}^m h_i \chi A_i$ for some $m \in \mathbb N$,~ $h_1, \dots, h_m \in \cH$ and $A_1, \dots, A_m \in \sigma(S)$. As above, by measurability of $S$, there exists a sequence $B_1, \dots, B_m \in \B(\mathbb R^d)$ such that $A_i=S^{-1}[B_i],~i \in 1,\dots, m$. 
It follows that $\eta(\omega)=\sum_{i=1}^m h_i \chi A_i(\omega)=\sum_{i=1}^m h_i \tilde \chi B_i(S(\omega)),~ \omega \in \Omega$; hence, $\eta = g \circ S$ for 
$g = \sum_{i=1}^m h_i \tilde \chi B_i$.  
Observe that in both cases $g$ is trivially Bochner-measurable by construction, since it is a simple function.

Let $\eta: \Omega \rightarrow \cH$ be an arbitrary Bochner-measurable function that is also measurable with respect to $\sigma(S)$. There exists a sequence of simple functions $\eta_n,~n \in \mathbb N$ such that for every $\omega \in \Omega$ we have $$\eta (\omega)=\lim_{n\rightarrow \infty} \eta_n (\omega).$$ Since each $\eta_n$ is a simple function, by our argument above, there exists a sequence of Bochner-measurable functions $g_n: \mathbb R^d \rightarrow \cH$ such that 
$\eta_n= g_n \circ S$ where for each $n \in \mathbb{N}$ the function 
$g_n$ is simple of the form $g_n=\sum_{i=1}^{m_n} h_{i,n} \tilde \chi B_{i,n}$ for some $m_n \in \mathbb N$ and a sequence of functions $h_{1,n},\dots,h_{m_n,n} \in \cH$ and a sequence of Borel sets $B_{1,n}, \dots, B_{m_n,n} \in \mathcal B(\mathbb R^d)$. 

Denote by $B:=\{S(\omega): \omega \in \Omega\} \subset \mathbb R^d$ the image of $S$, and observe that for each $x \in B$ $\lim_{n\rightarrow \infty}g_n(x)$ exists. To see this, note that by construction, for each $x \in B$ we have $x=S(\omega)$ for some $\omega \in \Omega$, thus, it holds that
\begin{align*}
    \lim_{n \rightarrow \infty}g_n(x)&=\lim_{n \rightarrow \infty}g_n(S(\omega))
    =\lim_{n\rightarrow \infty}\eta_n(\omega)
    =\eta(\omega).
\end{align*}
Moreover, we have  $P(S^{-1}[B])=P(\{\omega \in \Omega: S(\omega) \in B\}) = P(\Omega)=1$.
Define 
$g: {\mathbb R}^d \rightarrow \cH $ as
\begin{equation}
    g(x):= \begin{cases}
    \lim_{n\rightarrow \infty} g_n(x) &x \in B\\
    0 & x \notin B
    \end{cases}
\end{equation}
Thus, for each $\omega \in \Omega$ with probability $1$, we have 
\begin{equation}
\eta(\omega)=\lim_{n\rightarrow \infty} \eta_n(\omega)=\lim_{n \rightarrow \infty} g_n(S(\omega)) = g(S(\omega)),
\end{equation}
so that $\eta=g \circ S$ almost surely.
On the other hand, since by definition, $g$ is the pointwise limit of a sequence of simple functions $g_n$, it is Bochner-measurable, (see Property~\ref{prop:bmf} in Section~\ref{subsec:mesfun}) and the result follows. 
\end{proof}

\begin{lem} \label{lem:cond_exp_g}
Consider a separable Hilbert space $\cH$, a probability space $(\Omega,\mathcal{A},P)$ , a Bochner-integrable random variable $\mX:\Omega \rightarrow \cH$  and  a random variable $S:\Omega \rightarrow \mathbb{R}^{d}$. There exists a Bochner-measurable function $g:\mathbb{R}^{d} \rightarrow \cH$ such that 
\[
E^S\mX = g\circ S \text{\quad almost surely.}
\]
\end{lem}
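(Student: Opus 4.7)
The plan is to obtain Lemma \ref{lem:cond_exp_g} as a direct consequence of Lemma \ref{prelem:cond_exp_g}, reducing the statement to a measurability verification for the Hilbert-space-valued conditional expectation $E^S\mathbf{X}$.

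First I would recall that, by the Kolmogorov-style definition of the Hilbert-space-valued conditional expectation (see \citet{PIS16} and Appendix \ref{sec:HS_space_valued}), the random variable $\eta := E^S \mathbf{X} = E^{\sigma(S)} \mathbf{X}$ is defined to be $\sigma(S)$-measurable. Moreover, since $\mathbf{X}$ is Bochner-integrable and $\cH$ is separable, $E^{\sigma(S)}\mathbf{X}$ lies in $\mathcal{\altL}^1(\Omega,\sigma(S),P;\cH)$, and in particular it is Bochner-measurable in the sense of Property~\ref{prop:bmf} of Section \ref{subsec:mesfun} (it can be realized as the pointwise limit of $\sigma(S)$-measurable simple functions, either by construction or by the equivalence of the three notions of measurability recalled there).

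Once these two measurability properties of $\eta$ are in hand, the statement follows directly by applying Lemma \ref{prelem:cond_exp_g} with this particular choice of $\eta$: there exists a Bochner-measurable function $g:\mathbb{R}^d\rightarrow \cH$ such that $\eta = g\circ S$ almost surely, which is exactly the desired representation
\[
E^S \mathbf{X} = g\circ S \text{\quad almost surely.}
\]

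There is essentially no real obstacle to overcome here, as the combinatorial heart of the argument — lifting a $\sigma(S)$-measurable Hilbert-space-valued function through $S$ by building the representation first on simple functions and then passing to a pointwise limit on the image $S[\Omega]$ — has already been carried out in Lemma \ref{prelem:cond_exp_g}. The only subtlety worth a sentence of justification is that the Hilbert-space-valued conditional expectation genuinely produces a Bochner-measurable, $\sigma(S)$-measurable representative (as opposed to only a weakly measurable one); this is ensured by the separability of $\cH$, which collapses the distinctions between the three notions of measurability listed in Section \ref{subsec:mesfun}.
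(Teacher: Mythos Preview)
Your proposal is correct and follows exactly the paper's own proof: both simply observe that $E^S\mX$ is by definition a $\sigma(S)$-measurable (and hence, by separability of $\cH$, Bochner-measurable) function from $\Omega$ to $\cH$, and then invoke Lemma~\ref{prelem:cond_exp_g}. Your write-up is somewhat more detailed about the measurability justification, but the argument is the same.
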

\begin{proof}
Observing that by definition of conditional expectation, $E^S\mX$ is a $\sigma(S)$-measurable function from $\Omega$ to $\cH$, the result readily follows from Lemma~\ref{prelem:cond_exp_g}.
\end{proof}

\section{Proofs}

\subsection{Proof of Proposition~\ref{prop:multival}} \label{app:multival}
\begin{proof}
Let $\cM:=\phi[\mathbb{X}]$ denote the manifold corresponding to the image of $\mathbb{X}$ under $\phi$, equipped with the subspace topology and corresponding Borel $\sigma$-algebra $\B(\cM)$. Define the {\em metric projection map} $\pi:\cH \rightrightarrows \cM $ as a multi-valued function such that
\begin{align}\label{eq:projpi}
\pi(g) = \left \{h \in \cM: \bn h-g \bn = \min_{h' \in \cM} \bn h'-g \bn \right \}.
\end{align}
Note that the $\min$ operator in Equation \eqref{eq:projpi} is well-defined since by definition $h=\phi(x)$ for some $x \in \mathbb X$, the space $\mathbb X$ is compact and $\phi$ is a continuous function. 
Observe that $\pi$ is not a function, but a multi-valued function which assigns to each element $g \in \cM$ a subset of $\cM$, see, e.g. \cite[Section 6.1]{BEE93} for more on this notion. 

\paragraph{$\pi$ maps to non-empty compact subsets of $\cM$.} For each $g \in \cH$, set $f_g(h) = \bn h-g\bn$ with $h\in \cM$, and note that it is a continuous function  from $\cM$ to $\mathbb R$.  Let $m(g):=\min_{ h \in \cM} \bn h-g \bn$, which, by the above argument, is well-defined, and observe that,  since $\{m(g)\}$ is a closed subset of $\mathbb{R}$, then $\pi(g)=f_g^{-1}[\{m(g)\}]$ is a closed subset of $\cM$. Since $\cM$ is compact as the continuous image of the compact space $\mathbb{X}$ it follows that $\pi(g)$ is compact. 
\paragraph{$\pi$ is upper-semicontinuous.}
As follows from the standard definition, see, e.g. \citet[Definition 6.2.4 and Theorem 6.2.5]{BEE93}, the multi-valued function $\pi$ is said to be upper-semicontinuous at a point $g_0 \in \cH$ if for any open subset $V$ of $\cM$ such that $\pi(g_0) \subseteq V$ it holds that $\pi(g)\subseteq V$ for each $g$ in some neighbourhood of $g_0$.\footnote{Upper-semicontinuity is also referred to as upper-hemicontinuity for multi-valued functions in the literature.} To show the upper-semicontinuity of $\pi$ we proceed as follows. 
Take $g_0 \in \cH$. 
Let $V$ be an open subset of $\cM$ such that $\pi(g_0)\subseteq V$. 
Denote by $\widetilde{\cM}:=\cM \setminus V$. 
Note that $\widetilde{\cM}$ is compact since it is a closed subset of $\cM$ which is in turn compact. Therefore, in much the same way as for $\cM$, the $\min$ operator is well-defined for $\widetilde{\cM}$, i.e.  the minimum $\widetilde{m}(g_0):=\min_{h \in \widetilde{\cM}} \bn g_0 - h\bn$ exists. Moreover, since $\pi(g_0) \subseteq V$ and $V \cap \widetilde{\cM} =\emptyset$, it holds that $\widetilde{m}(g_0)>m(g_0)$. Therefore, there exists some $\delta>0$ such that $\widetilde{m}(g_0) \geq \delta + m(g_0)$. 
Consider an open ball $B_{g_0}(\delta/3)$ of radius $\delta/3$ around $g_0$. For every $g \in B_{g_0}(\delta/3)$ and all $h \in \pi(g_0)$ we have 
\begin{align}
\bn g-h \bn 
&\leq \bn g_0-g \bn + \bn g_0 -h \bn \leq \delta/3+m(g_0).\end{align}
On the other hand, we have
\begin{align}
\min_{h' \in \widetilde{\cM}}\bn g-h' \bn  \geq \big | \bn g-g_0 \bn - \bn g_0 - h' \bn \big | \geq m(g_0)+2\delta/3.
\end{align}
This implies that $\pi(g)\cap\widetilde{\cM}=\emptyset$ because there are already better candidates (closer to $g$) in $\pi(g_0)$ which is in turn contained in $V$ and thus does not intersect $\widetilde{\cM}$). 
Hence, it must hold that $\pi(g) \subseteq V$. Finally, since the choice of $g \in B_{g_0}(\delta/3)$ is arbitrary, it follows that for all $g \in B_{g_0}(\delta/3)$ we have
$\pi(g) \subseteq V$ and $\pi$ is upper-semicontinuous.

\paragraph{$\phi$ is a homeomorphism. } To see this, note that $\phi$ is bijective and continuous since the kernel is positive definite and continuous: it is by definition surjective and it is injective since $\phi(x) = \phi(y)$ for $x \not =y$ would imply that $a_1^2 \|\phi(x)\|^2 - 2 a_1 a_2 \langle \phi(x),\phi(y)\rangle + a_2^2 \|\phi(y)\|^2 = 0$ when $a_1 = a_2 = 1$.
The statement follows now from \citet[Theorem 3.1.13]{ENGEL} since $\mathbb X$ is compact and $\mathcal{M}$ is a Hausdorff space. 

\paragraph{Measurable selection.}  Since $\pi$ is upper-semicontinuous and maps to compact sets it is \textit{usco-compact} \cite[Definition 422A]{FREM}. This implies that $\pi$ is measurable as a function from $\cH$ to the compact subsets of $\cM$ where the latter is equipped with the Vietoris topology and the corresponding Borel algebra \cite[Proposition 5A4Db]{FREM}. Furthermore, there exists a Borel-measurable function $f$ from the compact, non-empty, subsets of $\cM$ to  $\cM$ such that $f(K) \in K$ for every compact, non-empty, subset $K$ of $\cM$. Define $W' = f(\pi(Z +h^*))$ then $W = \phi^{-1}(W')$ is the continuous image of the measurable function $W'$ and $W$ has the stated properties.
\end{proof}

\subsection{Proof of Proposition~\ref{prop:alpha_dep}}
 \label{app:proof_dep}
\begin{proof}
\textbf{(a)} Let $W$ be the random variable provided by Proposition \ref{prop:multival} and let $\bm{W} = \phi(W) - h^*$. Then $\bn \mZ - \bm{W} \bn_2  = d(\mZ + h^*,\mathcal{M})$.
 Observe that two applications of the Cauchy-Schwarz inequality yield
\begin{align*}
E(|\langle f, \mZ \rangle  - f(W) - \langle f, h^* \rangle|  \times \chi B) &\leq E| \langle f, (\mZ - \phi(W) - h^*) \times \chi B \rangle | \\&\leq \|f\| E(\chi B \times \|\mZ - \mb W \|) \\ &\leq \sqrt{P(B)} \|f\| \bn \mZ - \mb W\|_2 
\end{align*}
for all $f \in \cH$. Similarly, for any $f\in \cH$ it holds that 
\begin{equation*}
E| \langle f,\mZ -\phi(W) - h^*\rangle| \leq \|f\| E \|\mZ - \mb W \| \leq \|f\| \bn \mZ - \mb W \bn_2.
\end{equation*} 
Noting that $\mZ$ is $\cH$-independent of $S$ we find that  for any $f\in \cH$ and $B\in \sigma(S)$
\begin{align*}
&|E (f(W) \times \chi B) - E f(W) P(B)|  \\
&= |E((f(W) - \langle f,h^*\rangle) \times \chi B) - E (f(W) -\langle f,h^* \rangle) P(B)| \\
&\leq | E(\langle f,\mZ \rangle \times \chi B) - E \langle f,\mZ \rangle  P(B)| + (1+\sqrt{P(B)}) \|f\| \bn \mZ - \mb W \bn_2\\ 
&\leq 2 \|f\| \, d(\mZ + h^*,\mathcal{M}) .
\end{align*}
\textbf{(b)} For  $C\in \sigma(W)$ let $D$ be the image of $C$ under $W$, i.e. $D=W[C]$, $D \subset \mathbb{X}$. For $f \in \cH$ let
\[
\xi_C(f) = \| \chi D(W) - f(W) \|_2.
\]
Now, for any $B \in \sigma(S)$,
\[
|P(C \cap B) - E (f(W) \times \chi B)| \leq P(B)^{1/2} (E(\chi D(W) - f(W))^2)^{1/2} \leq  \xi_C(f).  
\]
Moreover, we have $|P(C) - E f(W)| \leq \xi_C(f)$. Hence, for any $f\in \cH$ it holds that 
\begin{align*}
|P(C\cap B) - P(C)P(B)| &\leq 2\xi_C(f)  + |E(f(W) \times \chi B) - E f(W) P(B)| \\
&\leq 2 (\xi_C(f) + \|f\| d(\mZ + h^*,\mathcal{M}) ).
\end{align*}
This proves the first part of the proposition.

\textbf{(c)} For the second part: by assumption for $A\in \sigma(\mZ)$ there exists a $C \in \sigma(W)$ such that $P(A \triangle C) \leq c$. For any such $C$ we have that 
$|P(C) - P(A)| \leq P(C \triangle A) \leq c$ and 
\[
|P(C\cap B) - P(A\cap B)| \leq P( (C\triangle A) \cap B) \leq c.
\]
Hence,
$|P(A\cap B) - P(A)P(B)| \leq 2 c +  2(\xi_C(f) + \|f\|  d(\mZ + h^*,\mathcal{M}))$ for all $f \in \cH$. Taking the infimum over $f$ and $C$ proves the second part of the proposition.
\end{proof}
\subsection{Proof of Proposition \ref{prop:dist_chaining}}\label{app:dist_chaining}
First note that since $\phi$ is continuous and $\mathbb X$ is compact it follows that $\rho$ is finite and 
\begin{align}
\|\mZ\|
&=\|\phi(X)-E^S \phi(X)+E \phi(X)\| \notag \\
&\leq \|\phi(X)\| + \| E^S \phi(X)\|+\| E \phi(X)\| \notag \\
&\leq  \|\phi(X)\| +  E^S \|\phi(X)\|+ E \|\phi(X)\| \label{eq:EphiBoch}\\
& \leq 3\rho \notag
\end{align}
where \eqref{eq:EphiBoch} follows from \citet[Theorem II.4]{DIES77} and \citet[Proposition 1.12]{PIS16}. 
Let ${\mZ}_i,~i \leq n$ be $n$ independent copies of $\mZ$ and define 
$Y_i:=\min_{h \in \cM}\|{\mZ}_i+h^*-h\|, ~i \leq n$, and 
$Y:=\min_{h \in \cM}\|\mZ+h^*-h\|$.  
By Hoeffding's inequality we have, 
\begin{align*}
\Pr \left (\left |\frac{1}{n}\sum_{i=1}^n Y_i-E Y \right | \geq \epsilon \right ) 
&\leq 2\exp\Bigl(-\frac{2n \epsilon^2}{25 \rho^2}\Bigr)
\end{align*} 
and the result follows.

\subsection{Proof of Proposition \ref{lem:best_approx}} \label{app:best_approx}
\begin{proof}
\textbf{(a)} We first show that 
\begin{equation} \label{eq:supp_inner_zero}
\bl E^S \mX^\bullet, (\mZ')^\bullet  \br = \bl E(\mX^\bullet), (\mZ')^\bullet \br.
\end{equation}
$E^S \mX^\bullet$ is an element of $L^2(\Omega,\sigma(S),P;\cH)$ and there exists a sequence of simple function $\{U_n\}_{n\in \mathbb{N}}$ such that $\lim_{n\rightarrow \infty} \bn U_n^\bullet - \mX^\bullet\bn = 0$. In particular, $\lim_{n\rightarrow \infty} \langle U_n^\bullet, (\mZ')^\bullet \rangle = \bl \mX^\bullet, (\mZ')^\bullet  \br$ and $\| E(U_n^\bullet) - E(\mX^\bullet) \| \leq E \|U_n^\bullet -\mX^\bullet\| = \bn U_n^\bullet -\mX^\bullet \bn$ goes to zero in $n$.  
Consider some $U_n = \sum_{i=1}^{m} h_{i} \times \chi A_{i}$, $h_{i} \in \cH, A_{i} \in \sigma(S)$, $m\in \mathbb{N}$, and observe that
\[
\bl U_n^\bullet, (\mZ')^\bullet \br = \sum_{i=1}^{m} E\langle h_{i} \times \chi A_i, \mZ' \rangle = \sum_{i=1}^{m} E( \langle h_{i}, \mZ'  \rangle \times \chi A_i) = \sum_{i=1}^{m} E\langle h_{i},\mZ' \rangle \times E(\chi A_{i}), 
\]
using the assumption on $\mZ'$. The assumption can be applied because $\chi A_i$ is $\sigma(S)$-measurable, and, hence, can be written as a function of $S$ \cite{SHI89}[II.\S 4.Thm.3].
Now, 
\[
\sum_{i=1}^{m} E\langle h_{i},\mZ' \rangle \times E(\chi A_{i}) = E \langle \sum_{i=1}^{m} h_{i} \times E(\chi A_{i}),\mZ' \rangle = E \langle E(U_n^\bullet),\mZ' \rangle   
\]
and $\bl U_n^\bullet, (\mZ')^\bullet \br = \bl E(U_n^\bullet), (\mZ')^\bullet \br$. Equation \eqref{eq:supp_inner_zero}  follows since $U_n^\bullet$ converges to $\mX^\bullet$ and $E(U_n^\bullet)$ converges to $E(\mX^\bullet)=0$ in $L^2(\Omega,\mathcal{A},P;\cH)$.

\textbf{(b)}
Since $\bl E^S \mX^\bullet, (\mZ')^\bullet \br = \bl E(\mX^\bullet), (\mZ')^\bullet \br$ and $\bl \mX^\bullet,E^S \mX^\bullet \br = \bn E^S \mX^\bullet\bn^2$ it follows right away that 
\begin{align*}
&\bn \mX^\bullet - (\mZ')^\bullet \bn^2 \\
&= \bn \mX^\bullet - \mZ^\bullet \bn^2 + 2 \bl E^S \mX^\bullet - E(\mX^\bullet), \mX^\bullet - E^S\mX^\bullet + E(\mX^\bullet) - (\mZ')^\bullet \br \\ &\qquad \qquad \qquad \qquad\qquad\qquad\qquad\qquad\qquad\qquad\qquad\qquad\qquad\qquad\qquad\qquad+ \bn \mZ^\bullet - (\mZ')^\bullet \bn^2   \\
&= \bn \mX^\bullet - \mZ^\bullet \bn^2 + \bn \mZ^\bullet - (\mZ')^\bullet \bn^2.
\end{align*}
Hence, $\mZ$ is a minimizer and it is almost surely unique because $\bn \mZ^\bullet - (\mZ')^\bullet \bn^2$ is only zero if $\mZ^\bullet = (\mZ')^\bullet$.
\end{proof}

\subsection{Proof of Proposition \ref{Prop:Est_error_cond}} \label{app:proof_est_error}
\begin{proof}
\textbf{(a)}
In the following, let $s_1,\ldots, s_l$ be the values $S$ can attain. Furthermore,  let $f_i = E_n(\phi(X) | S= s_i)  - E(\phi(X) | S = s_i)$, and let $\cF = \sigma(X_1,S_1,\ldots, X_n, S_n)$. Each $f_i$ is $\cF$-measurable. Observe that for $i\not= j$, 
\begin{align*}
&E^\cF( \langle f_i \times \chi \{S = s_i\}, f_j \times \chi \{S = s_j\} \rangle) = E^\cF( \langle f_i, f_j \rangle \times \chi \{S = s_i, S =  s_j\} ) \\
&= \langle f_i, f_j \rangle \cdot E^\cF(\chi \{S = s_i, S = s_j \} ) = \langle f_i, f_j \rangle \cdot P( S = s_i, S = s_j ) 
=0
\end{align*}
since $f_i,f_j$ are $\cF$-measurable and $S$ is independent of $\cF$.
Hence,
\begin{align*}
E^\cF( \| E_n^S \phi(X) - E^S \phi(X)\|^2) 
&= E^\cF \Bigl(\Bigl\|\sum_{i=1}^l f_i \times \chi \{S = s_i\}   \Bigr\|^2 \Bigr)     \\
 &= \sum_{i=1}^l E^\cF( \|f_i \times \chi \{S = s_i\}\|^2) \\
 &= \sum_{i=1}^l E^\cF( \|f_i \|^2 \times \chi \{S = s_i\}) \\
&= \sum_{i=1}^l \|f_i \|^2  P(S = s_i) \\
&= \sum_{i=1}^l P(S = s_i) \sup_{\|h\|\leq 1} |E_n(h(X) | S = s_i)  - E(h(X) |  S =  s_i)|^2.  
\end{align*}
\textbf{(b)} For each $i$ either $P(S=s_i) = 0$ or 
\[
 \sup_{\|h\|\leq 1} |E_n(h(X) | S = s_i)  - E(h(X) |  S =  s_i)|^2 \in O_P^*(n^{-1})
\]
using \citet{SG}. Since there are only $l$-many terms in the sum this result carries over to the whole sum.
\end{proof}
\subsection{Proof of Proposition \ref{Prop:Est_error_cond_cont}} \label{app:proof_est_error_cont}
\begin{proof}
Recall the notation $\mathfrak{D}_{\ell}:=\{\Delta_i: i \in 1,\dots,{\ell^d}\},~\ell \in \mathbb N$ where $\Delta_1,\Delta_2,\dots,\Delta_{{\ell^d}}$ are the dyadic cubes $\Delta_1,\Delta_2,\dots,\Delta_{{\ell^d}}$ of side-length $1/{\ell}$ discretizing  $\mathbb S$.
Let $\mathcal G:=\sigma(\{S^{-1}[\Delta] : ~\Delta \in \mathfrak{D}_{\ell}\})$ and choose a Bochner measurable $g:\mathbb{S} \rightarrow \cH$ according to 
Lemma \ref{lem:cond_exp_g} such that $g(S) = E^S \phi(X) $ (a.s.).
Since $\mathcal G \subseteq \sigma(S)$ we have,
\begin{equation}\label{eq:GSordering}
E^{\mathcal G}\phi(X)=E^{\mathcal G}(E^{S}\phi(X)))=E^{\mathcal G}(g(S))~\textit{\quad almost surely}.
\end{equation}
In the following, we use $g \circ S$ instead of $g(S)$ for readability. With probability one it holds that, 
\begin{align*}
E^{\mathcal F} \|g \circ S &- E^\mathcal{G} (g \circ S)\|^2 \\
&=E^{\mathcal F}\Bigl(   \sum_{\Delta \in \mathfrak{D}_{\ell}}\|g \circ S - E^\mathcal{G} (g \circ S)  \|^2 \chi\{S \in \Delta\} \Bigr) \\
&= \sum_{\Delta \in \mathfrak{D}_{\ell}}E^{\mathcal F} \|(g \circ S- E^\mathcal{G} (g \circ S))\chi\{S \in \Delta\}\|^2 \\
&=\sum_{\Delta \in \mathfrak{D}_{\ell}} E^{\mathcal F} \|(g \circ S-\sum_{\Delta' \in \mathfrak{D}_{\ell}}E(g \circ S|S \in \Delta')\chi\{S \in \Delta'\})\chi\{S \in \Delta\}\|^2\\
&=\sum_{\Delta \in \mathfrak{D}_{\ell}} E^{\mathcal F} \left ( \|g \circ S -E(g \circ S|S \in \Delta) \|^2 \chi\{S \in \Delta\}\right)
\end{align*}
By \citet[II.Corollary 8]{DIES77} for any $\Delta \in \mathfrak{D}_{\ell}$  it holds that the conditional expectation of $g$ given $\Delta$ is
in the closed convex hull of $g[\Delta]:=\{g(s):s \in \Delta\}$. That is,
\begin{equation*}
\frac{1}{\mu(\Delta)} \int_{\Delta} g \,d\mu \in \cch(g[\Delta])
\end{equation*}
This means that for every $\epsilon >0$ there exist $k \in \mathbb N$, and some $s_1,\dots,s_k  \in \Delta$ and $\alpha_1,\dots, \alpha_k  >0$ with $\sum_{j=1}^k \alpha_i=1$ such that 
\begin{equation*}
\Bigl\|\frac{1}{\mu(\Delta)} \int_{\Delta} g \, d\mu -\sum_{j=1}^k \alpha_j g(s_j) \Bigr\|^2 \leq \epsilon.
\end{equation*}
Let $D:=S^{-1} [\Delta]$.
We obtain
\begin{equation*}
\frac{1}{\mu(\Delta)} \int_{\Delta} g \, d\mu = \frac{1}{P(D)} \int_{D} (g \circ S) \, dP = E(g \circ S|S \in \Delta).
\end{equation*}
Since $g \circ S$ is assumed to be $L$-Lipschitz-continuous, for all $\Delta \in \mathfrak{D}_{\ell}$ we have
\begin{align*}
  \|g \circ S&-\sum_{j=1}^k \alpha_j g(s_j)\|^2\chi\{S \in \Delta\} \\
 & \leq \sup_{s \in \Delta}\|\sum_{j=1}^k \alpha_j (g(s)-g(s_j))\|^2\\
 & \leq  \sup_{s \in \Delta}(\sum_{j=1}^k\alpha_j \|g(s)-g(s_j)\|)^2 \\
 & \leq  \Bigl(\sum_{j=1}^k\alpha_j \sup_{s \in\Delta} \|g(s)-g(s_j)\|\Bigr)^2\\
 &\leq L^2\Bigl(\sum_{j=1}^k \alpha_j \sup_{s \in \Delta}\|s-s_j\|\Bigr)^2\\
 &\leq d L^2 {\ell}^{-2}.
\end{align*}
Moreover, noting that $\chi\{\cdot\} = \chi^2\{\cdot\}$ we obtain,  
$
\|g \circ S-\sum_{j=1}^k \alpha_j g(s_j)\|\chi\{S \in \Delta\} \leq  L \sqrt{d}{\ell}^{-1}.
$
It follows that,
\begin{align*}
&\|(g \circ S-E(g \circ S  |S \in \Delta))\chi\{S \in \Delta\}\|^2 \nonumber \\
&=\|(g \circ S-E(g \circ S|S \in \Delta))\|^2\chi\{S \in \Delta\}  \nonumber \\
&\leq \Bigl( \bigl\|g \circ S-\sum_{j=1}^k \alpha_j g(s_j)\bigr\| + \bigl\|\sum_{j=1}^k \alpha_j g(s_j) - E(g\circ S| S \in \Delta)\bigr\| \Bigr)^2 \chi\{S \in \Delta\}  \\
& \leq  \Bigl( \bigl\|(g \circ S-\sum_{j=1}^k \alpha_j g(s_j)) \bigr\| + \epsilon  \Bigr)^2 \chi\{S \in \Delta\}.  \\
\end{align*}
Since this holds for every $\epsilon>0$ we have,
\begin{equation*}
\|(g \circ S-E(g \circ S  |S \in \Delta))\chi\{S \in \Delta\}\|^2  \leq d L^2 {\ell}^{-2}.
\end{equation*}
Observe that for $\Delta \not = \Delta'$, $\Delta, \Delta' \in \mathfrak{D}_\ell$,
\begin{align*}
E^{\mathcal F}\Bigl( \|g \circ S-E(g \circ S|S \in \Delta)\| \chi\{S \in \Delta\} \times
\|g \circ S-E(g \circ S|S \in \Delta')\| \chi\{S \in \Delta'\}
 \Bigr) = 0
\end{align*}
and
\begin{align*}
E^{\mathcal F}\|g \circ S -E^\mathcal{G} (g \circ S)\|^2 
&=\sum_{\Delta \in \mathfrak{D}_{\ell} } E^{\mathcal F} \|g \circ S-E(g \circ S|S \in \Delta)\|^2 \chi\{S \in \Delta\}\\
&=\sum_{\Delta \in \mathfrak{D}_{\ell} } E^{\mathcal F} \|g \circ S-E(g \circ S|S \in \Delta)\|^2 \chi^2\{S \in \Delta\}\\
& = \sum_{\Delta \in \mathfrak{D}_{\ell} } E^{\mathcal F} \|(g \circ S-E(g \circ S|S \in \Delta))\chi\{S \in \Delta\}\|^2 \chi\{S \in \Delta\}\\
&\leq d L^2 {\ell}^{-2}.
\end{align*}
In particular,
\begin{equation} \label{eq:long_est_proof_term1}
E^{\mathcal F}\|g \circ S -E^\mathcal{G} \phi(X) \|^2
\leq d L^2 {\ell}^{-2}.
\end{equation}

On the other hand, in much the same way as in the proof of Proposition \ref{Prop:Est_error_cond},  we have
\begin{equation*}
E^\cF( \| E_n^S \phi(X) - E^{\mathcal G} \phi(X)\|^2) = \sum_{\Delta \in \mathfrak{D}_{\ell}} P(S\in \Delta) \sup_{\|h\|\leq 1} |E_n(h(X) | S \in \Delta)  - E(h(X) |  S \in \Delta)|^2.  
\end{equation*}
Let $U := (X,S)$ and define the push forward measure $\nu:=P \circ U^{-1}$ of $P$ onto $\mathbb X \times \mathbb S$ under $U$. Set $\nu_n:=\frac{1}{n}\sum_{i=1}^n\delta_{(X_i,S_i)}$ where $\delta_{(X,S)}$ denotes the measure that has point mass at $(X,S)$.
Define the projection map $\pi:\mathbb{X} \times \mathbb{S} \rightarrow \mathbb{X}  $ 
which maps a tuple $(x,s)\in \mathbb{X} \times \mathbb{S}$ to its first 
element so that $\pi((x,s))=x$. 
For each $h \in \cH$ such that $\|h\| \leq 1$  and every $\Delta \in \mathfrak{D}_{\ell}$ we obtain
\begin{align*}
|E_n(h(X) | S \in \Delta)  &- E(h(X) |  S \in \Delta)|^2 \\
&= |E_n(h(\pi(U)) | U \in \mathbb X \times \Delta)  - E(h(\pi(U)) |U \in \mathbb X \times \Delta)|^2\\
&=\Bigl|\int_{\mathbb X \times \Delta} h \circ \pi \, d \nu_n- \int_{\mathbb X \times \Delta} h \circ \pi \, d \nu\,\Bigr|^2.
\end{align*}
For each $\ell \in \mathbb N$ define $\mathfrak{C}_{\ell}:=\{\mathbb X \times \Delta: \Delta \in \mathfrak{D}_{\ell}\}$.  
By assumption, $\cH_{\mathfrak{C}}=\{h \times \chi {D}: h \in \cH,~\|h\|\leq 1,~D \in \bigcup_{\ell \in \mathbb N} \mathfrak{C}_{\ell}\}$ is $P$-Donsker and for $D \in \mathfrak{C}_\ell$, with $D = \mathbb{X} \times \Delta_i$ for $i \leq \ell$, $\nu(D) = P S^{-1}[\Delta_i] \geq b \ell^{-d}$. For a given $\alpha \in (0,1/2)$ let 
$\ell$ be $\lfloor n^{\alpha/d} \rfloor$  so that $\ell^{-d} \geq n^{-\alpha}$. Similarly to \cite[Proposition~3.2]{SG} it follows that there exists a constant $M$ such that 
for all $n\geq 1$ and corresponding $\ell$,
\begin{align*}
\sup_{\|h\|  \leq 1}\sup_{C \in \mathfrak{C}_{\ell}}
\bigl|\int_{C} h \circ \pi  \, d \nu_n- \int_{C} h \circ \pi \, d \nu \, \bigr| \leq 
2M \ell^dn^{-1/2}/b.
\end{align*}
Thus,
\begin{align} \label{eq:long_est_proof_term2} 
E^\cF( \| E_n^S \phi(X) - E^{\mathcal G} \phi(X)\|^2) \leq 4M ^2\ell^{2d}/n b^2.
\end{align}
Using Equation \eqref{eq:long_est_proof_term1} and 
\eqref{eq:long_est_proof_term2} as well as 
the Cauchy-Schwarz inequality for conditional expectations we obtain,
\begin{align*}
E^\cF( \| E_n^S \phi(X) - E^S \phi(X)\|^2) 
&\leq E^\cF( \| E_n^S \phi(X) - E^{\mathcal G} \phi(X)\| + \| E^{\mathcal G} \phi(X) - E^S \phi(X)\|)^2\\
&\leq 4M ^2\ell^{2d}/nb^2+4M\sqrt{d}L\ell^{d-1}n^{-1/2}/b +dL^2{\ell^{-2}}.
\end{align*}
Because $\ell = \lfloor n^{\alpha/d} \rfloor $ the upper-bound becomes
\[
4M^2n^{2\alpha-1} /b^2+dL^2/(n^{-\alpha/d}-1)^2+4M\sqrt{d}Ln^{\alpha(1-\frac{1}{d})-\frac{1}{2}} /b.
\]
We claim that the rate of convergence in $n$ is optimized by  $\alpha^*=d / 2(d+1)$: For $\alpha \geq \alpha^*$ we have 
\begin{align*}
&2\alpha-1 \geq \alpha(1-1/d)-1/2\geq -2\alpha/d
\end{align*}
and the dominant term $2\alpha -1$ is minimized at $\alpha^*$. On the other hand, for $\alpha \leq \alpha^*$,  
\begin{align*}
&2\alpha-1 \leq -2\alpha/d\text{\enspace ~and~ \enspace}\alpha(1-1/d)-1/2\leq -2\alpha/d.
\end{align*}
In this case the dominant term is also minimized for $\alpha^*$.
Therefore, we must set 
\[ \ell^*=\lfloor n^{\alpha^*/d} \rfloor= \lfloor n^{\frac{1}{2(d+1)}} \rfloor. \]
\end{proof}

\section{Solution to the oblivious kernel ridge regression optimization problem} \label{app:ridge_reg_opt}
Define $\mb z_i:=\left (\langle \mb Z_1, \mb Z_i \rangle \enspace \cdots \enspace \langle  \mb Z_n , \mb Z_i\rangle \right)^\top,~i \in 1..n$, and observe that
\[
   \cO=\begin{pmatrix}
    \vert & \vert &\dots & \vert \\
    \mb z_1   & \mb z_2  &\dots&\mb z_n \\
    \vert & \vert&\dots&\mb \vert
\end{pmatrix}.
\]
Let $\hat f$ be the minimizer of the regularized least-squares error as given by  \eqref{eq:lse}. By  the representer theorem there exist scalars $\alpha_1,\ldots, \alpha_{n}$ such that $\hat f = \sum_{j=1}^{n} \alpha_j \mb Z_j$. 
It follows that $\langle \hat{f}, \mb Z_i \rangle  
=\sum_{j=1}^n \alpha_j \langle  \mb Z_j , \mb Z_i \rangle$ so that,
\begin{align}
 \sum_{i=1}^n (\langle \hat{f}, \mb Z_i \rangle - Y_i)^2 + \lambda \|\hat{f}\|^2= (  \cO \boldsymbol{\alpha}-\mb{y} ) ( \cO \boldsymbol{\alpha}-\mb{y} )^\top+\lambda \boldsymbol{\alpha}^\top \cO \boldsymbol{\alpha}\label{eq:objhatf}
\end{align}
where $\boldsymbol{\alpha}:=(\alpha_1,\dots,\alpha_n)^\top$ and  $\mb{y}:=(Y_1,\dots,Y_n)^\top$. Noting that $\hat{f}$ is the minimizer, and thus taking the gradient of \eqref{eq:objhatf} with respect to  $\boldsymbol{\alpha}$ we obtain,
\[\nabla_{\boldsymbol{\alpha}} \Bigl(   (  \cO \boldsymbol{\alpha}-\mb{y} ) ( \cO \boldsymbol{\alpha}-\mb{y} )^\top+\lambda \boldsymbol{\alpha}^\top \cO \boldsymbol{\alpha} \Bigr)=0.\]

Solving for $\boldsymbol{\alpha}$ and noting that $\cO$ is symmetric, we obtain
\begin{align*}
\boldsymbol \alpha 
&= \cO^{-1}\Bigl (\cO^{\top}+\lambda I\Bigr)^{-1}\cO^{\top}\mb{y}\\
&=\cO^{-1}\Bigl (\cO^{\top}+\lambda I\Bigr)^{-1}\cO\mb{y} &\text{since $\cO$ is symmetric}\\
&=\cO^{-1}\Bigl (\cO^{\top}+\lambda I\Bigr)^{-1}(\cO^{-1})^{-1}\mb{y} \\
&=\Bigl( \cO^{-1}\Bigl (\cO^{\top}+\lambda I\Bigr)\cO\Bigr)^{-1}\mb{y}\\
&=\Bigl( (\cO^{-1}\cO +\lambda \cO^{-1})\cO\Bigr)^{-1}\mb{y} &\text{since $\cO$ is symmetric}\\
&= (\cO+\lambda I)^{-1} \mb{y}.
\end{align*}

\begin{algorithm}[t]
\caption{Generating the oblivious kernel matrix; the sum over an empty index set is treated as $0$}
\label{alg:obl}
\begin{algorithmic}
   \STATE {\bfseries Input:} data $(x_1,s_1),\ldots (x_{2n}, s_{2n})$, disjoint sets $A_1,\ldots, A_{\ell}$ which cover $\mathbb{S}$
	\STATE set $M = \sum_{i=n+1}^{2n} \sum_{j={n+1}}^{2n} k(x_i,x_j)/n^2$
   	\STATE set $\mathcal{\altI}_i = \emptyset,~i \in 1,\dots, \ell $
	\FOR{$i=n+1$ {\bfseries to} $2n$}   
	\STATE find index $u$ such that $s_i \in A_u$
	\STATE update $\mathcal{\altI}_u \leftarrow \mathcal{\altI}_u \cup \{i\}$
	\ENDFOR
	\FOR{$i=1$ {\bfseries to} $n$}
	\STATE set $\rho_i = \sum_{u=n+1}^{2n} k(x_i,x_u) /n$
	\FOR{$a=1$ {\bfseries to} $l$}	
	\STATE set $\xi_{i,a} = \sum_{u\in \mathcal{\altI}_a} k(x_i,x_u)/|\mathcal{\altI}_a|$
	\ENDFOR
	\ENDFOR 	
	\FOR{$a=1$ {\bfseries to} $l$}
	\STATE set $\tau_a = \sum_{u \in \mathcal{\altI}_a} \sum_{v=n+1}^{2n} k(x_u,x_v)/(n|\mathcal{\altI}_a|)$
	\FOR{$b=1$ {\bfseries to} $l$} 
\STATE set	$o_{a,b} = \sum_{u \in \mathcal{\altI}_a, v\in \mathcal{\altI}_b} k(x_u,x_v)/(|\mathcal{\altI}_a| |\mathcal{\altI}_b|)$
	\ENDFOR
	\ENDFOR
	\FOR{$i=1$ {\bfseries to} $n$}
		\FOR{$j=i$ {\bfseries to} $n$}
		\STATE set $a$ such that $s_j \in A_a$	
		\STATE set $b$ such that $s_i \in A_b$	
		\STATE set $\mathcal{O}_{i,j} = k(x_i,x_j)	 - \xi_{i,a} - \xi_{j,b}	+ o_{a,b}  + M - \rho_i -\rho_j -\tau_a -\tau_b$	
		\STATE set $\mathcal{O}_{j,i} = \mathcal{O}_{i,j}$
		\ENDFOR	
	\ENDFOR
\STATE {\bfseries Return:} $\mathcal{O}$
\end{algorithmic}
\end{algorithm}

\section{Algorithms} \label{suppl:alg}
We discuss three algorithms in this section: an algorithm to calculate the oblivious kernel matrix (Section \ref{suppl:alg_obl_matrix}), an algorithm to calculate $\langle \mZ, \mZ_i \rangle$ which is needed for prediction (Section \ref{suppl:alg_obl_pred}), and an algorithm to calculate $W$, the projection of $\mZ_i$ onto $\mathcal{M}$, which also allows us to estimate the distance between $\mZ$ and $\mathcal{M}$
(Section \ref{suppl:alg_W}).

\subsection{Calculating the oblivious kernel matrix} \label{suppl:alg_obl_matrix}
We start by deriving the algorithm for calculating the oblivious matrix. The result algorithm is summarized in Algorithm \ref{alg:obl} on page \pageref{alg:obl}. Throughout we assume that $A_1,\ldots, A_l$ is a partition of $\mathbb{S}$ and we assume that $2n$ samples $(X_i,S_i)$ are available.  The algorithm splits the data into two parts of size $n$ and uses the 
samples $n+1, \ldots, 2n$ to estimate the conditional expectation.  The remaining $n$ samples are then used to generate the features $\mZ_i$, $i=1,\ldots,n$. The features $\mZ_i$ will not be explicitly stored. The only thing that will be stored is the oblivious matrix $\mathcal{O}$. To calculate the oblivious matrix we only need kernel evaluations. To see this consider any  $i\leq n$, then
\begin{align*}
\mZ_i &= \phi(X_i) - E_n^{S_i} \phi(X) = \phi(X_i) - \sum_{u=1}^l E_n(\phi(X) |S\in A_u) \times \chi\{S_i \in A_u\}.
\end{align*}
For $u=1,\ldots,l$ let 
\[
 N_u = \sum_{v=n+1}^{2n} \chi\{ S_v \in A_u\}
\]
be the number of samples with indices within $n+1,\ldots, 2n$ that fall into set $A_u$. The estimate of the elementary conditional expectation is 
\[
E_n(\phi(X) |S \in A_u) = \frac{1}{N_u} \sum_{v=n+1}^{2n} \phi(X_v) \times \chi\{S_v \in A_u\},
\]
which attains values in $\cH$.

Now consider the inner product between $\mZ_i$ and $\mZ_j$, $i,j \leq n$:
\begin{align*}
\langle \mZ_i,\mZ_j \rangle =& \langle \phi(X_i),\phi(X_j) \rangle - \langle \phi(X_i), E_n^{S_j} \phi(X) \rangle - \langle E_n^{S_i} \phi(X), \phi(X_j) \rangle \\ 
&+\langle E_n^{S_i} \phi(X),E_n^{S_j} \phi(X) \rangle + \langle \phi(X_i), E_n(\phi(X)) \rangle\\ 
& + \langle  E_n(\phi(X)), \phi(X_j) \rangle - \langle E_n^{S_i} \phi(X), E_n(\phi(X)) \rangle  \\
&- \langle  E_n(\phi(X)), E_n^{S_j} \phi(X) \rangle    + \langle E_n(\phi(X)), E_n(\phi(X)) \rangle.
\end{align*}
This reduces to calculations involving only the kernel function and no other functions from $\cH$. In detail,
\[
\langle \phi(X_i),\phi(X_j) \rangle = k(X_i,X_j),
\]
and
\[
\langle \phi(X_i), E_n^{S_j} \phi(X) \rangle = 
\sum_{u=1}^l \langle \phi(X_i), E_n(\phi(X) |S\in A_u) \rangle \times \chi\{S_j \in A_u\},
\]
where 
\begin{align*}
\langle \phi(X_i), E_n(\phi(X) |S \in A_u) \rangle 
&=  \frac{1}{N_u} \sum_{l=n+1}^{2n} \langle \phi(X_i), \phi(X_l)\rangle  \times \chi\{S_l  \in A_u\} \\
&= \frac{1}{N_u} \sum_{l=n+1}^{2n}  k(X_i, X_l)  \times \chi\{S_l \in A_u\}.
\end{align*}
The inner product $\langle  E_n(\phi(X) |S \in A_u), \phi(X_j) \rangle$ can be calculated in the same way. Furthermore,
\begin{align*}
&\langle E_n^{S_i} \phi(X),E_n^{S_j} \phi(X) \rangle = \sum_{u=1}^l \sum_{v=1}^l  
\langle  E_n(\phi(X) |S \in A_u),   E_n(\phi(X) |S \in A_v) \rangle
\times \chi\{ S_i \in A_u, S_j \in A_v\}
\end{align*}
and
\begin{align*}
&\langle  E_n(\phi(X) |S\in A_u),   E_n(\phi(X) |S\in A_v) \rangle \\
&= \frac{1}{N_u N_v} \sum_{l=n+1}^{2n} \sum_{m=n+1}^{2n} \langle \phi(X_l), \phi(X_m) \rangle \times \chi\{S_l \in A_u, S_m \in A_v\} \\
&= \frac{1}{N_u N_v} \sum_{l=n+1}^{2n} \sum_{m=n+1}^{2n} k(X_l,X_m)  \times \chi\{S_l \in A_u, S_m \in A_v\}.
\end{align*}
The terms involving $E_n(\phi(X)) = (1/n) \sum_{i=n+1}^{2n} \phi(X_i)$ are reduced in a similar way to kernel evaluations. Combining these calculations leads to  Algorithm \ref{alg:obl}.

\subsection{Prediction based on oblivious features} \label{suppl:alg_obl_pred}
 To be able to predict labels for new observations $(X,S)$ in a regression or classification setting we need to transform $(X,S)$ into an oblivious feature $\mZ$. The approach to do is the same as for the training data. In particular, the conditional expectation estimates $E_n^{S_j} \phi(X)$ are needed to transform $(X,S)$ into $\mZ$.
For kernel methods $\mZ$ itself is never calculated explicitly but it appears in algorithms in the form of inner products 
$\langle \mZ, \mZ_i \rangle$,
where $i\leq n$ and $\mZ_i$ are the oblivious features corresponding to the training set. These inner product can be calculated in exactly the same way as the inner products $\langle \mZ_i, \mZ_j \rangle$ in  Section \ref{suppl:alg_obl_matrix}.

\subsection{Projecting the oblivious features onto the manifold} \label{suppl:alg_W}
The quadratic distance between $\mZ$, or more precisely $\mZ(\omega)$, and $\mathcal{M}$ in $\cH$ is equal to
\[
\inf_{x \in \mathbb{X}} \|\mZ  - \phi(x)\|^2 = \|\mZ\|^2 + 
\inf_{x\in \mathbb{X}} ( k(x,x) -2  \langle \mZ, \phi(x) \rangle).
\]
The constant  $\|\mZ\|^2$ is of no relevance and we are looking for a minimum (when this is well-defined) of the function 
\[
f(x) =  k(x,x) -2  \langle \mZ, \phi(x) \rangle
\]
in $\mathbb{X}$. Using the conditional expectation $E_n^{S} \phi(X)$ and $\mZ = \phi(X) - E_n^S \phi(X) + E(\phi(X))$ we can rewrite $f(x)$ as 
\[
f(x) = k(x,x) - 2 (k(X,x)  - E_n^S k(X , x)  +  E(k(X,x) ).
\]
The function $f$ is $(\alpha,L)$-H\"older continuous whenever $k(x,\cdot)$ is $(\alpha,L')$-H\"older-continuous for all $x\in \mathbb{X}$ with $L = 8L'$, since then
\begin{align*}
|f(x) - f(y)| &\leq |k(x,x) - k(y,y)| +2   (|k(X,x) - k(X,y)|  + E_n^S |k(X,y) - k(X,x)|  \\ 
&\hspace{8cm} +  E |k(X,x) -k(X,y)| )  \\
&\leq | k(x,x) - k(x,y)| + |k(x,y) - k(y,y)|  +  6L' \|y-x\|^{\alpha} = 8L' \|y-x\|^{\alpha}.
\end{align*}
This property of $f$ is useful because various kernel functions are H\"older-continuous and efficient algorithms are available to optimize H\"older-continuous functions. In particular, there exist classical global optimization algorithms \citep{VAN97} and bandit algorithms \citep{MUN14} for this task.

The projection of $\mZ$ onto $\mathcal{M}$ can also be used directly to approximate $d_n(\mZ+h^*,\cM)$ and, by applying Proposition \ref{prop:dist_chaining}, to estimate $d(\mZ+h^*,\cM)$.

\end{document}